\documentclass[11pt,letterpaper]{article}
\usepackage[T1]{fontenc}
\usepackage{lmodern}
\usepackage[margin=1in]{geometry}
\usepackage{microtype}
\usepackage{amsmath,amssymb,amsthm}
\usepackage{algorithm,algpseudocode}
\usepackage{booktabs,array}
\usepackage[section]{placeins}
\usepackage[round,authoryear]{natbib}
\usepackage{xcolor}
\definecolor{linkblue}{HTML}{264C75}
\usepackage{hyperref,xurl}
\hypersetup{colorlinks=true,linkcolor=linkblue,citecolor=linkblue,urlcolor=linkblue,
  pdftitle={Revisiting Distributed Sign-Based Optimization with Variance Reduction},
  pdfauthor={Wei Jiang, Zechao Li, Lijun Zhang}}
\newcommand{\E}{\mathbb E}
\newcommand{\R}{\mathbb R}
\newcommand{\Sign}{\operatorname{Sign}}
\newcommand{\norm}[1]{\lVert #1\rVert_2}
\newcommand{\normone}[1]{\lVert #1\rVert_1}
\newcommand{\norminf}[1]{\lVert #1\rVert_\infty}
\newcommand{\del}{\Delta_f}
\newcommand{\ind}[1]{\mathbf1\{#1\}}
\newtheorem{assumption}{Assumption}
\newtheorem{theorem}{Theorem}
\newtheorem{lemma}[theorem]{Lemma}
\newtheorem{corollary}[theorem]{Corollary}
\newtheorem{proposition}[theorem]{Proposition}
\theoremstyle{remark}

\title{Revisiting Distributed Sign-Based  Variance Reduction}
\author{Wei Jiang\textsuperscript{\rm 1,2} \qquad Zechao Li\textsuperscript{\rm 1} \qquad Lijun Zhang\textsuperscript{\rm 2,3}\\
\textsuperscript{\rm 1}\footnotesize School of Computer Science and Engineering, Nanjing University of Science
and Technology, China\\
\textsuperscript{\rm 2}\footnotesize National Key Laboratory for Novel Software Technology,Nanjing University, China\\
\textsuperscript{\rm 3}\footnotesize School of Artificial Intelligence, Nanjing University, China 
 }
\date{}

\begin{document}
\maketitle
\begin{abstract}
Sign-based methods reduce communication costs in distributed environments, but aggregating local signs can introduce bias when data are heterogeneous. As a result, existing sign-based variance reduction methods fail to obtain the optimal convergence rates.
In this paper, we solve this problem and obtain optimal rates for both nonconvex stochastic and finite-sum optimization. We first give a counterexample showing that majority voting can fail to approach stationary points even with exact local gradients. Motivated by this limitation, we propose tracking the global gradient at the server through unbiased compression of recursive gradient increments. As a result, we can obtain the convergence rates of \(O(\sqrt{d/K}+\sqrt d\,(a/(nK))^{1/3})\) for the \(\ell_1\)-norm and \(O(\sqrt{a/K}+\sqrt a/(nK)^{1/3})\) for the \(\ell_2\)-norm. Here, \(K\) is the iteration number, \(n\) is the number of workers, \(d\) is the dimension, and \(a=1+\omega\), with \(\omega\) denoting the compressor’s relative variance. For finite-sum problems with \(M\) components, we combine periodic exact gradient refreshes with compressed component-gradient differences. The resulting total sample complexities are \(O(M+d\sqrt{aM}\epsilon^{-2})\) and \(O(M+a\sqrt M\epsilon^{-2})\) for \(\ell_1\) and \(\ell_2\) gradient norms at most \(\epsilon\), matching the corresponding bounds in centralized settings.
\end{abstract}

\section{Introduction}
\label{sec:introduction}
In this paper, we focus on smooth nonconvex optimization in the distributed setting:
\begin{equation}
\min_{x\in\R^d} f(x)=\frac1n\sum_{j=1}^n f_j(x),
\qquad f_j(x)=\E_{\xi\sim\mathcal D_j}F_j(x;\xi).
\label{eq:problem}
\end{equation}
In this setting, each worker $j\in \{1,2,\cdots,n\}$ accesses its own data distribution $\mathcal D_j$, and these data distributions may differ for different nodes. Such heterogeneous objectives arise when data
are partitioned arbitrarily across machines. We also consider the finite-sum formulation,
in which every worker has a fixed dataset of $m$ components, and the problem can be written in the form:
\begin{equation}
\min_{x\in\R^d} f(x)=\frac1n\sum_{j=1}^n f_j(x),\qquad f_j(x)=\frac1m\sum_{i=1}^m f_{j,i}(x).
\label{eq:fs-problem}
\end{equation}
Here $M=nm$ is the total number of components across the system. Our goal
is to find  stationary points while reducing the information
sent between workers and the parameter server.

Sign-based methods are highly attractive for this purpose since a vector of
signs only takes one bit per coordinate
\citep{bernstein2018signsgd,bernstein2019majority}.
Later studies show that the momentum technique improves the small-batch convergence guarantees
\citep{safaryan2021stochastic,jiang2025momentum}, and variance reduction
further improves the quality of the gradient estimate. In particular,
the previous centralized variance reduction method SSVR attains an $\ell_1$ convergence rate of
$O(\sqrt d\,K^{-1/3})$ under mean-squared smoothness, and SSVR-FS
attains a total gradient complexity of
$O(M+d\sqrt M\,\epsilon^{-2})$ for $\ell_1$ accuracy $\epsilon$
on a centralized dataset of $M$ components
\citep{jiang2024ssvr}. 

These results motivate extending the benefits of
variance reduction to the distributed problem. Such an extension is not automatic. In a majority-vote method, each worker
first converts its local estimate into a sign vector, and the server
takes the majority of those signs. For different local objectives,
this nonlinear aggregation may not align with the gradient of their
average.
SSVR-MV \citep{jiang2024ssvr} provides two options with different server
updates. Option~1 uses deterministic majority voting and has an $\ell_1$
guarantee $O(\sqrt{d/K}+d/\sqrt n)$ with an error floor.
Option~2 uses randomized worker and server signs and obtains an $\ell_2$
rate $O(d^{1/4}K^{-1/4})$. SSVR-MV Option 2 also requires projecting each local
estimate onto a Euclidean ball before worker-side sign quantization, and
its $O(K^{-1/4})$ guarantee leaves a gap to the centralized
variance-reduced $O(K^{-1/3})$ rate.

In this paper, we obtain improved convergence rates with new algorithms. We assume an unbiased relative-variance compressor that introduces
noise proportional to the input magnitude, allowing more general compressors.
We then let the server
retain a global gradient estimate. Workers send compressed recursive
increments, and the server accumulates these messages before choosing
a global update direction. Our recursion compresses an
increment consisting of a scaled gradient and a same-sample
gradient difference, whose second moment is controlled by the
parameter choices. Finally, we use deterministic signs for the
$\ell_1$ criterion and unbiased compression for the $\ell_2$ guarantee.

For the finite-sum problem, we periodically compute
the exact global gradient and use compressed component-gradient
differences for steps between checkpoints. The same component is evaluated
at two successive iterates, so component smoothness controls the update
noise even when local gradients are large or heterogeneous. The resulting total component-gradient complexities are
$O(M+d\sqrt{aM}\epsilon^{-2})$ for $\ell_1$-norm and
$O(M+a\sqrt M\epsilon^{-2})$ for $\ell_2$-norm, where $a=1+\omega$ and $\omega$ is the compressor's relative variance.
These have the same oracle orders as the centralized
SSVR-FS algorithm and Euclidean variance-reduced methods, respectively.

\paragraph{Our contributions.}
We summarize the results and contributions of this paper below.
\begin{itemize}
\item We first analyze the obstruction to local majority voting. We give a three-worker counterexample in Proposition~\ref{prop:majority-floor}, which suffers a nonzero average-gradient floor despite exact local gradients.
It isolates the bias of the final vote and motivates tracking the global
gradient before taking signs.
\item For stochastic problems, we attain
$O(\sqrt{d/K}+\sqrt d(a/(nK))^{1/3})$ in the $\ell_1$ criterion,
removing the previous error floor. Then, we obtain 
$O(\sqrt{a/K}+\sqrt a /(nK)^{1/3})$ for the $\ell_2$-norm.
This improves the horizon dependence of previous methods. 
\item We also obtain the finite-sum bounds for the sign-based distributed setting.
Exact refreshes and compressed component differences remove the need for
a bounded gradient assumption. We obtain total oracle complexities $O(M+d\sqrt{aM}\epsilon^{-2})$ and
$O(M+a\sqrt M\epsilon^{-2})$ for the $\ell_1$ and $\ell_2$ criteria, matching the corresponding centralized rates.
\end{itemize}

\section{Assumptions}
\label{sec:problem}
We now specify the assumptions used.
Write $g_j(x;\xi)=\nabla F_j(x;\xi)$ for a stochastic gradient and
$\del=f(x_1)-f_*$ for the initial objective gap, where $f_*$ is a lower bound of the objective $f$. 

We first give the following assumption on the objective function for the stochastic problem~(\ref{eq:problem}).
\begin{assumption}\label{ass:oracle}
The objective satisfies $f\ge f_*>-\infty$. For every worker and all $x,y$, we have
\begin{align}
\E_\xi g_j(x;\xi)&=\nabla f_j(x),\\
\E_\xi\norm{g_j(x;\xi)}^2&\le H^2,\label{eq:oracle-moment}\\
\E_\xi\norm{g_j(x;\xi)-g_j(y;\xi)}^2
&\le L^2\norm{x-y}^2.\label{eq:ms-smooth}
\end{align}
\end{assumption}
\textbf{Remark:} Here, Jensen's inequality implies that $f_j$ and $f$ are also $L$-smooth.
Besides, inequality~(\ref{eq:oracle-moment}) can be replaced with bounded oracle variance $\sigma^2$ and a true-gradient bound
$\norm{\nabla f_j(x)}\le G_2$, which imply inequality~(\ref{eq:oracle-moment}) with
$H^2=\sigma^2+G_2^2$.

Next, we give the general compression assumption below.
\begin{assumption}[Relative-variance compressor]\label{ass:compressor}
For every input $v$, the output $Q(v)$ satisfies
\begin{equation}
\E[Q(v)\mid v]=v,\qquad
\E[\norm{Q(v)-v}^2\mid v]\le\omega\norm v^2.
\label{eq:relative}
\end{equation}
\end{assumption}
Note that unbiasedness also gives
\begin{equation}
\E[\norm{Q(v)}^2\mid v]\le (1+\omega)\norm v^2.
\label{eq:compressor-second}
\end{equation}
\textbf{Remark:}
The general compression assumption includes the usual scaled stochastic
sign. For $v\ne0$, let $\rho(v)=\norminf v$ and draw independent signs with
$\Pr(S_k=1\mid v)=(1+v_k/\rho(v))/2$. Then
\begin{equation}
Q(v)=\rho(v)S,\qquad Q(0)=0,
\label{eq:scaled-sign}
\end{equation}
is unbiased and satisfies $1+\omega=d$ since $\E[\norm{Q(v)-v}^2\mid v]
=d\norminf v^2-\norm v^2\le(d-1)\norm v^2$.

Finally, we list assumptions for the finite-sum problem \eqref{eq:fs-problem}.
\begin{assumption}\label{ass:finite-sum}
The average objective is lower bounded and each component satisfies
\begin{equation}
\norm{\nabla f_{j,i}(x)-\nabla f_{j,i}(y)}\le L\norm{x-y}
\quad\text{for all }x,y,j,i.
\label{eq:fs-smooth}
\end{equation}
\end{assumption}
The finite-sum results use Assumptions~\ref{ass:compressor}
and~\ref{ass:finite-sum}, requiring no bound on gradients,
oracle variance, or gradient heterogeneity.

\section{The proposed method}
\label{sec:algorithm}
We begin with the earlier SSVR-MV method and explain the bias created by
its majority vote. This motivates a global gradient estimate at the server,
followed by either a sign update for the $\ell_1$ criterion or an unbiased compressed
update for the $\ell_2$ criterion.

\subsection{SSVR-MV and the limitation of its majority voting}
\label{sec:majority-motivation}
\paragraph{The earlier method.}
SSVR-MV, introduced by \citet{jiang2024ssvr}, combines a local
variance-reduced estimator with bidirectional sign communication. Specifically, each worker $j$ first initializes $v_1^j=g_j(x_1;\xi_1^j)$ and, for $t\ge2$, computes
\begin{equation}
v_t^j=g_j(x_t;\xi_t^j)+(1-\beta)
[v_{t-1}^j-g_j(x_{t-1};\xi_t^j)].
\label{eq:local-storm}
\end{equation}
The two stochastic gradients in this update use the same sample. For a fixed
radius $R>0$ and an input satisfying $\norminf v\le R$, define the
random sign vector $S_R(v)$ coordinatewise by
\begin{equation}
\Pr(S_R(v)_k=+1\mid v)=\frac{1+v_k/R}{2},
\qquad \Pr(S_R(v)_k=-1\mid v)=\frac{1-v_k/R}{2}.
\label{eq:original-sign}
\end{equation}
Thus the signs estimate the scaled input without any bias.
In SSVR-MV~(Option 1), worker $j$ sends
$q_t^j=S_R(v_t^j)$, and the server
broadcasts their deterministic majority vote:
\begin{equation}
s_t=\Sign\!\left(\frac1n\sum_{j=1}^n q_t^j\right),
\qquad x_{t+1}=x_t-\eta s_t.
\label{eq:original-majority}
\end{equation}
The setting of their Theorem~3 uses $\norminf{g_j(x;\xi)}\le G$,
$\beta=1/2$, $\eta=\mathcal{O}((dK)^{-1/2})$, and $R=4G$, keeping the inputs inside the
radius. The server forms a new vote
at every iteration.

\paragraph{Where the bias enters.}
The worker signs are unbiased for their scaled inputs, but the majority
operation is nonlinear. This can be seen exactly with three workers.
For any $q_1,q_2,q_3\in\{-1,1\}$,
\begin{equation}
\Sign(q_1+q_2+q_3)
=\frac{q_1+q_2+q_3-q_1q_2q_3}{2}.
\label{eq:majority-polynomial}
\end{equation}
Writing $m_j=\E q_j$,
with all expectations conditional on these inputs, we obtain
\begin{align}
\E\Sign(q_1+q_2+q_3)
&=\frac12\left(\sum_{j=1}^3\E q_j-\E[q_1q_2q_3]\right)\nonumber\\
&=\frac12(m_1+m_2+m_3-m_1m_2m_3).
\label{eq:majority-identity}
\end{align}
If $(m_1,m_2,m_3)=(u,u,-2u)$ with $0<u\le1/2$, the average sign
mean is zero, whereas the expected majority vote is $u^3>0$.
Unbiased worker messages therefore can produce a biased vote.

\begin{proposition}[A gradient floor with exact local estimates]
\label{prop:majority-floor}
There exist three lower-bounded smooth one-dimensional local objectives
with gradients bounded by $G=1$ such that SSVR-MV Option~1,
initialized at a global minimizer and using exact gradients, satisfies
\begin{equation}
\liminf_{K\to\infty}\E|f'(x_\tau)|\ge\frac1{1542}>0
\label{eq:majority-floor-main}
\end{equation}
for $R=4$ and step size
$\eta=\Theta(K^{-1/2})$, where $\tau$ is independent and uniform on $\{1,\ldots,K\}$. 
\end{proposition}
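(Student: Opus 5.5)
The plan is to build an explicit three-worker example in which the majority vote has a stable equilibrium $x^\star$ that is \emph{not} a stationary point of $f$, and then show that the iterates of SSVR-MV Option~1 concentrate near $x^\star$.

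\textbf{Step 1: reduce to a Markov chain on $\R$.} With exact gradients, $v_1^j=f_j'(x_1)$. The recursion~(\ref{eq:local-storm}) then gives $v_t^j=f_j'(x_t)+(1-\beta)[v_{t-1}^j-f_j'(x_{t-1})]$, so by induction $v_t^j=f_j'(x_t)$ for every $\beta$. Since $G=1$ and $R=4$, the worker signs are independent with means $m_j(x)=f_j'(x)/4$. By~(\ref{eq:majority-identity}), $x_{t+1}=x_t-\eta s_t$ with $s_t\in\{\pm1\}$ and
\[
\E[s_t\mid x_t]=D(x_t),\qquad D(x):=\tfrac12\bigl(m_1+m_2+m_3-m_1m_2m_3\bigr)(x).
\]

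\textbf{Step 2: local construction.} On an interval $I=[-A,A]$ I take $f_1'=f_2'\equiv 1/2$ and $f_3'(x)=-1+\lambda x$, with $\lambda>0$ small enough that $|f_3'|\le1$ on $I$. Then $f'(x)=\lambda x/3$ on $I$, so $0$ is a local minimizer. The means are $m=(1/8,1/8,(-1+\lambda x)/4)$, which gives
\[
D(x)=\tfrac12\Bigl(\tfrac{\lambda x}{4}+\tfrac{1-\lambda x}{256}\Bigr)=\tfrac12\Bigl(\tfrac{63\lambda x}{256}+\tfrac1{256}\Bigr).
\]
Hence $D$ is increasing with unique zero $x^\star=-1/(63\lambda)$, and $D(x)(x-x^\star)=\tfrac{63\lambda}{512}(x-x^\star)^2$. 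Because the update is $x-\eta s$, this drift pulls the iterates toward $x^\star$. At that point $|f'(x^\star)|=1/189$, which is comfortably above $1/1542$; the slack absorbs the error terms below. I choose $A$ so that $x^\star$ sits well inside $I$.

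\textbf{Step 3: global extension.} Outside $I$ I extend the derivatives smoothly, keeping them bounded by $1$. The requirements are: each $f_j$ is bounded below (its derivative eventually has the coercive sign as $x\to\pm\infty$); $f$ attains its global minimum at $0$; and $D(x)(x-x^\star)\ge c\min\{(x-x^\star)^2,|x-x^\star|\}$ everywhere. The last condition makes the chain confined globally. This matters because the total displacement $K\eta=\Theta(K^{1/2})$ is unbounded, so local confinement alone is not enough. I expect this global bookkeeping to be the main obstacle. My first attempt is to let $f_1',f_2'$ switch smoothly to $\operatorname{sign}(x)\cdot\tfrac12$-type profiles far away, and to let $f_3'$ saturate at $\pm1$. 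I would then check the sign of $D$ region by region using the cubic formula.

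\textbf{Step 4: concentration and conclusion.} Let $V_t=(x_t-x^\star)^2$. Then
\[
\E[V_{t+1}\mid x_t]\le V_t-2\eta c\,\min\{V_t,\sqrt{V_t}\}+\eta^2.
\]
The chain starts at $0$, at distance $|x^\star|=O(1)$. A transient of $O(1/\eta)=O(K^{1/2})$ steps brings it into the quadratic regime, and after that summing the recursion gives $\frac1K\sum_t\E V_t\le O(\eta)+O(1/(\eta K))\to0$. On the event $x_\tau\in I$ we have $|f'(x_\tau)|\ge|f'(x^\star)|-\tfrac{\lambda}{3}|x_\tau-x^\star|$. Using Jensen's inequality on $\E|x_\tau-x^\star|\le(\E V_\tau)^{1/2}$, and a Markov bound on $\Pr(x_\tau\notin I)$, gives $\liminf_K\E|f'(x_\tau)|\ge 1/189-o(1)\ge 1/1542$.
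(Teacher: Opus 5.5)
Your route differs from the paper's. It is a Lyapunov drift argument showing that the iterates concentrate at the zero $x^\star$ of the expected vote, and it can be completed. However, Step~2 as written contains a concrete error.

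With $f_1'=f_2'\equiv1/2$ you are forced to take $f_3'(0)=-1$, and then $f_3'(x)=-1+\lambda x<-1$ for every $x<0$. Since $x^\star=-1/(63\lambda)<0$ must lie in $I$, no choice of $\lambda$ gives $|f_3'|\le1$ on $I$. In fact $f_3'(x^\star)=-64/63$ for every $\lambda$. So your construction violates the bound $G=1$ (with $R=4G=4$) that the proposition asserts.

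The repair is to shrink the offsets. Take $f_1'=f_2'\equiv b$ and $f_3'=-2b+\lambda x$ on $I$. Your computation then gives $\lambda x^\star=-2b^3/(16-b^2)$ and $f'(x^\star)=\lambda x^\star/3$, and you need $2b+2b^3/(16-b^2)\le1$. For example, $b=1/4$ (the paper's offsets) gives $x^\star=-1/(510\lambda)$ and $|f'(x^\star)|=1/1530>1/1542$, provided $1/510<\lambda A\le1/2$. Taking $b$ closer to $1/2$ (e.g.\ $b=0.49$, floor $\approx1/201$) restores some slack.

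Step~3, which you leave open, closes with one observation. Since $\partial D/\partial m_j=\frac12(1-m_km_l)\ge\frac{15}{32}$ when all $|m_k|\le1/4$, making every $f_j'$ nondecreasing makes $D$ nondecreasing. For instance, let each $f_j'$ saturate smoothly to a negative constant on the far left and a positive one on the far right, within $[-1,1]$. Then:
\begin{itemize}
\item $|D(x)|\ge\min\{D(A),-D(-A)\}>0$ off $I$, which gives your drift condition with an explicit $c$;
\item each $f_j$ is convex and coercive, hence bounded below;
\item $f'$ is nondecreasing with $f'(0)=0$, so $0$ is a global minimizer.
\end{itemize}
No region-by-region check of the cubic is needed. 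The paper's $\log\cosh$ objectives are exactly such a monotone family.

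Step~4 needs no transient analysis either. Telescoping your recursion from $V_1=(x^\star)^2$ and using $\E V_{K+1}\ge0$ gives directly
\[
\frac1K\sum_{t=1}^K\E\min\{V_t,\sqrt{V_t}\}\le\frac{(x^\star)^2}{2c\eta K}+\frac{\eta}{2c}\to0 .
\]
Splitting on $\{|x_t-x^\star|\ge1\}$ turns this into $\E|x_\tau-x^\star|\to0$. Global $L$-smoothness of $f$ then yields $\E|f'(x_\tau)|\ge|f'(x^\star)|-L\,\E|x_\tau-x^\star|$. Your claimed bound on $\frac1K\sum_t\E V_t$ does not follow from the recursion alone, because far from $x^\star$ the decrease is only of order $\eta\sqrt{V_t}$. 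It is also unnecessary, as is the event $\{x_\tau\in I\}$.

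By contrast, the paper needs neither an equilibrium nor global confinement. Its objectives make each $f_j'$ equal to $f'$ plus a constant. The expected vote is therefore a function $M$ of $h=f'(x)$ alone, with value $u^3$ at $h=0$ and slope at most $771/2048$. The paper combines the exact identity $\frac1K\sum_t\E M(x_t)=-\E x_{K+1}/(\eta K)$ with the crude bound $\E x_{K+1}^2\le(2+\eta)\eta K$, which uses only $xM(x)\ge-1$. This forces the average vote to vanish, and the Lipschitz bound converts that into the floor.

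That argument is shorter and needs only $\eta K\to\infty$, but it says nothing about where the iterates go. Yours needs the stronger drift condition and $\eta\to0$. In return it shows $x_\tau\to x^\star$ in $L^1$ and pins the floor at exactly $|f'(x^\star)|$. Indeed, $1/1542$ is precisely the linearized value of $|f'|$ at the vote's equilibrium in the paper's own example.
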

\begin{proof}[Construction and intuition]
Take
\[
f_1(x)=f_2(x)=\tfrac12\log\cosh x+\tfrac14x,
\qquad f_3(x)=\tfrac12\log\cosh x-\tfrac12x.
\]
Their average is $f(x)=\tfrac12\log\cosh x$, minimized at $x=0$.
At this point the worker sign means are $(u,u,-2u)$ with $u=1/16$,
so \eqref{eq:majority-identity} gives a nonzero expected vote despite
$f'(0)=0$. Appendix~\ref{app:majority} proves that this bias produces
the stated gradient floor.
\end{proof}

\subsection{Tracking the global gradient through compressed increments}
\label{sec:server-tracking}
We retain the variance-reduction idea in \eqref{eq:local-storm}, but
change what workers transmit and what the server remembers.
For $t\ge2$, worker $j$ forms the increment
\begin{align}
h_t^j&=g_j(x_t;\xi_t^j)-(1-\beta)g_j(x_{t-1};\xi_t^j)\label{eq:update}\\
&=\beta g_j(x_t;\xi_t^j)+(1-\beta)
[g_j(x_t;\xi_t^j)-g_j(x_{t-1};\xi_t^j)].\nonumber
\end{align}
The worker sends an encoding of $Q(h_t^j)$ and the server then accumulates the increments:
\begin{equation}
z_t=(1-\beta)z_{t-1}+\frac1n\sum_{j=1}^n Q(h_t^j).
\label{eq:server-recursion}
\end{equation}
The difference from \eqref{eq:original-majority} is substantive:
the server retains $z_{t-1}$ and corrects it with numerical
increments before choosing its update direction. In particular, it does not
take a majority vote of independently randomized local signs.

The increment decomposition is central to the analysis. The fresh-gradient
term is multiplied by $\beta$, while mean-squared smoothness controls the
same-sample difference. These two terms give
\begin{equation}
\E_t\norm{h_t^j}^2\le2\beta^2H^2
+2L^2\norm{x_t-x_{t-1}}^2.
\label{eq:update-moment}
\end{equation}
Here $\E_t$ conditions on the history before the current worker samples
and compression calls. Unlike repeatedly compressing a full local
estimate, compressing this increment introduces noise that decreases
with smaller $\beta$ and smaller model movements. The two methods below
control this movement differently. At initialization, every worker compresses $B_0$ independent
sample gradients separately. The whole algorithm is specified in Algorithm~\ref{alg:ucsvr}.

\subsection{The compressed update for the server}
\label{sec:l1}
\paragraph{For $\ell_1$ measure} For the $\ell_1$ criterion, the natural direction is
$s_t=\Sign(z_t)$. With an exact estimate $z_t=\nabla f(x_t)$,
its inner product with the gradient equals $\normone{\nabla f(x_t)}$.
With an approximate estimate, the corresponding inequality is
\begin{equation}
-\langle g,\Sign(z)\rangle\leq-\normone g+2\normone{z-g} \leq-\normone g+2\sqrt d\norm{z-g}.
\label{eq:alignment}
\end{equation}
Thus the deterministic sign update converts a bound on the server's
tracking error into an $\ell_1$ stationarity guarantee. We call this
algorithm \emph{DVR-Sign}: distributed variance reduction with sign updates.

\begin{algorithm}[t]
\caption{DVR-Sign / DVR-Q}
\label{alg:ucsvr}
\label{alg:ucsvr-l2}
\begin{algorithmic}[1]
\Require $x_1,K,B_0,\eta,\beta$, compressor $Q$.
\State Each worker draws $B_0$ samples at $x_1$.
\State Server initializes $z_1=(nB_0)^{-1}\sum_{j,b}Q(g_j(x_1;\xi_{1,b}^j))$.
\For{$t=1,\ldots,K$}
  \If{$t\ge2$}
    \State Each worker draws a sample $\xi_t^j$, forms \eqref{eq:update}, and sends $Q(h_t^j)$.
    \State Server updates $z_t$ by \eqref{eq:server-recursion}.
  \EndIf
  \If{variant is DVR-Sign}
    \State Server broadcasts $s_t=\Sign(z_t)$ to all workers.
  \Else
    \State Server broadcasts $s_t=Q(z_t)$ to all workers.
  \EndIf
  \State All workers and the server set $x_{t+1}=x_t-\eta s_t$.
\EndFor
\State Draw an independent uniform $\tau\in\{1,\ldots,K\}$ and \Return $x_\tau$.
\end{algorithmic}
\end{algorithm}

\paragraph{For $\ell_2$ measure}
An $\ell_1$-norm guarantee already implies an $\ell_2$-norm guarantee through
$\norm g\le\normone g$. However, this transfer retains the
$\sqrt d$ tracking-error coefficient in \eqref{eq:alignment}.
We therefore use the unbiased compressor already defined in
Assumption~\ref{ass:compressor} for the server broadcast as well:
\begin{equation}
s_t=Q(z_t),\qquad x_{t+1}=x_t-\eta s_t.
\label{eq:l2-downlink}
\end{equation}
We call this method \emph{DVR-Q}. Conditional on the history $\mathcal G_t$
containing $z_t$ and all current worker messages, fresh server randomness
gives
\begin{equation}
\E[s_t\mid\mathcal G_t]=z_t,\qquad
\E[\norm{s_t}^2\mid\mathcal G_t]\le (1+\omega)\norm{z_t}^2.
\label{eq:l2-moments}
\end{equation}
With an exact estimate,
the expected gradient inner product is $\norm{\nabla f(x_t)}^2$.
For an approximate estimate, the elementary identity
\begin{equation}
-2\langle g,z\rangle
=-\norm g^2-\norm z^2+\norm{z-g}^2
\label{eq:l2-alignment}
\end{equation}
connects descent to squared Euclidean tracking error. Its negative
$\norm z^2$ term is essential: it absorbs the additional tracking error
caused by the random step length in \eqref{eq:l2-moments}. The two methods thus use different directions for different criteria.
Deterministic signs give the $\ell_1$ inner product in \eqref{eq:alignment}, while
unbiased $Q$ updates give squared $\ell_2$ descent without its explicit
$\sqrt d$ conversion.

\FloatBarrier
\section{Stochastic guarantees}
\label{sec:stochastic-main}
We first analyze DVR-Sign under the $\ell_1$ stationarity criterion and then analyze DVR-Q directly in the squared Euclidean norm. The two
methods share the server recursion, but their model movements and
descent arguments differ. Write $a=1+\omega$ and $c=a/n$.
Denote $e_t=z_t-\nabla f(x_t)$, $E_t=\E\norm{e_t}^2$, and
$\overline E_K=K^{-1}\sum_{t=1}^K E_t$.

\subsection{The \texorpdfstring{$\ell_1$}{l1} guarantee for DVR-Sign}
\label{sec:stochastic-l1}
\begin{theorem}\label{thm:master}
Under Assumptions~\ref{ass:oracle} and~\ref{ass:compressor}, for any $\eta>0$, $\beta\in(0,1]$, and integers
$K,B_0\ge1$, the DVR-Sign variant of Algorithm~\ref{alg:ucsvr} satisfies
\begin{align}
E_1&\le cH^2/B_0,\qquad
E_t\le(1-\beta)E_{t-1}+2c(\beta^2H^2+L^2\eta^2d)\quad(t\ge2),\label{eq:tracking}\\
\overline E_K&\le
\frac{cH^2}{B_0\beta K}
+2c\left(H^2\beta+\frac{L^2\eta^2d}{\beta}\right),\label{eq:average-error}\\
\E\normone{\nabla f(x_\tau)}
&\le\frac{\del}{\eta K}+\frac{L\eta d}{2}
+2\sqrt{d\,\overline E_K}.
\label{eq:master}
\end{align}
\end{theorem}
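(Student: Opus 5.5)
The plan proves the three claims in order: a one-step recursion for the tracking error, then its time average, then a descent argument for the sign update.

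\emph{Initialization.} We have $z_1-\nabla f(x_1)=(nB_0)^{-1}\sum_{j,b}\bigl(Q(g_j(x_1;\xi^j_{1,b}))-\nabla f_j(x_1)\bigr)$. The $nB_0$ summands are independent with zero mean, since both the samples and the compression calls are independent and unbiased. By \eqref{eq:compressor-second} and \eqref{eq:oracle-moment}, each summand has second moment at most $aH^2$. Hence $E_1\le aH^2/(nB_0)=cH^2/B_0$.

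\emph{Recursion for $t\ge2$.} Write
\[
e_t=(1-\beta)e_{t-1}+\frac1n\sum_j\bigl(Q(h_t^j)-\nabla f_j(x_t)+(1-\beta)\nabla f_j(x_{t-1})\bigr).
\]
Conditional on the past (which fixes $x_t$, because $s_{t-1}$ is a function of $z_{t-1}$), each summand has zero mean. This holds because $\E_t h_t^j=\nabla f_j(x_t)-(1-\beta)\nabla f_j(x_{t-1})$ and $Q$ is unbiased. The summands are also independent across workers. So the cross term with $e_{t-1}$ vanishes, and
\[
E_t\le(1-\beta)^2E_{t-1}+\frac1{n^2}\sum_j\E\norm{Q(h_t^j)}^2\le(1-\beta)E_{t-1}+\frac{a}{n}\max_j\E\norm{h_t^j}^2,
\]
using that the variance is at most the second moment. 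Now apply \eqref{eq:update-moment} together with $\norm{x_t-x_{t-1}}^2=\eta^2\norm{\Sign(z_{t-1})}^2\le\eta^2 d$. This gives \eqref{eq:tracking}.

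\emph{Averaging.} Sum \eqref{eq:tracking} over $t=2,\dots,K$ and add $E_1$:
\[
\beta\sum_{t=1}^{K}E_t\le E_1+2c(K-1)(\beta^2H^2+L^2\eta^2d),
\]
which holds because $\sum_t E_t-(1-\beta)\sum_{t<K}E_t\ge\beta\sum_t E_t$. Dividing by $\beta K$ yields \eqref{eq:average-error}.

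\emph{Descent.} By $L$-smoothness of $f$ and $\norm{s_t}^2\le d$,
\[
f(x_{t+1})\le f(x_t)-\eta\langle\nabla f(x_t),\Sign(z_t)\rangle+\frac{L\eta^2d}{2}.
\]
Applying \eqref{eq:alignment} with $g=\nabla f(x_t)$ gives
\[
\eta\normone{\nabla f(x_t)}\le f(x_t)-f(x_{t+1})+2\eta\sqrt d\norm{e_t}+\frac{L\eta^2d}{2}.
\]
Telescope, take expectations, and use $f(x_{K+1})\ge f_*$. Then divide by $\eta K$, and bound $K^{-1}\sum_t\E\norm{e_t}\le\sqrt{\overline E_K}$ by Jensen/Cauchy--Schwarz. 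Since $\tau$ is uniform and independent, this gives \eqref{eq:master}.

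The main point needing care is the conditioning in the recursion step. The history must include $z_{t-1}$, and hence $x_t$, but not the fresh samples or compressor randomness at step $t$. With that choice the martingale cross term vanishes, and the increment moment bound \eqref{eq:update-moment} applies with the deterministic movement bound $\eta^2 d$. Everything else is routine.
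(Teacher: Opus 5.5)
Your proposal is correct and follows essentially the same route as the paper: separate compression of the $nB_0$ initialization samples for $E_1$; the centered error identity $e_t=(1-\beta)e_{t-1}+\frac1n\sum_j\bigl(Q(h_t^j)-\nabla f_j(x_t)+(1-\beta)\nabla f_j(x_{t-1})\bigr)$, with conditioning that fixes $x_t$ but not the fresh samples or compressor calls; summing the recursion and dividing by $\beta K$; and the sign-alignment descent inequality combined with Jensen/Cauchy--Schwarz. Your bound $\norm{x_t-x_{t-1}}^2\le\eta^2 d$ is slightly more careful than the paper's equality, since it also covers coordinates where $\Sign(z_{t-1})$ vanishes.
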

Balancing these effects gives the following result.

\begin{corollary}\label{cor:rate}
Under the assumptions of Theorem~\ref{thm:master}, choose
\begin{align}
u_1=\frac1{\sqrt K+c^{1/3}K^{2/3}},\qquad
\beta=u_1,\quad \eta=\frac{Hu_1}{L\sqrt d},\quad
B_0=\left\lceil\frac1{u_1^2K}\right\rceil.
\label{eq:rate-parameters0}
\end{align}
Then DVR-Sign satisfies
\begin{align}\label{eq:rate-l1}
\begin{split}
    \E\normone{\nabla f(x_\tau)}
\le & \sqrt d\left[
\left(\frac{L\del}{H}+\frac H2\right)K^{-1/2}
+\left(\frac{L\del}{H}+2\sqrt5H\right)
  \left(\frac cK\right)^{1/3}\right]\\
  = & O\!\left(\sqrt{\frac dK}
+\sqrt d\left(\frac{1+\omega}{nK}\right)^{1/3}\right).
\end{split}
\end{align}
\end{corollary}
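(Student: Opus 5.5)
The corollary follows by substituting the parameter choices \eqref{eq:rate-parameters0} into the three bounds of Theorem~\ref{thm:master}. The only tool needed beyond that is the pair of elementary inequalities
\[
u_1\le K^{-1/2}\qquad\text{and}\qquad u_1\le c^{-1/3}K^{-2/3},
\]
each obtained by dropping one summand of the denominator $\sqrt K+c^{1/3}K^{2/3}$.

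First I check that the theorem applies. Since $u_1\le K^{-1/2}\le1$, we have $\beta\in(0,1]$. The step size $\eta$ is positive, and $B_0\ge1$ is an integer.

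Next I bound the average tracking error \eqref{eq:average-error}.
\begin{itemize}
\item Since $B_0\ge 1/(u_1^2K)$, the initialization term satisfies $cH^2/(B_0\beta K)\le cH^2u_1^2K/(u_1K)=cH^2u_1$.
\item With $\beta=u_1$ and $\eta^2=H^2u_1^2/(L^2d)$, the second term equals $2c\bigl(H^2u_1+H^2u_1\bigr)=4cH^2u_1$. The factor $L^2\eta^2 d/\beta$ collapses exactly to $H^2u_1$, so both contributions have the same order.
\end{itemize}
Adding these gives $\overline E_K\le 5cH^2u_1$. Using $u_1\le c^{-1/3}K^{-2/3}$ yields $\overline E_K\le 5H^2c^{2/3}K^{-2/3}$. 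Hence
\[
2\sqrt{d\,\overline E_K}\le 2\sqrt5\,H\sqrt d\,(c/K)^{1/3}.
\]

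Then I treat the remaining two terms of \eqref{eq:master}.
\begin{itemize}
\item Using $1/u_1=\sqrt K+c^{1/3}K^{2/3}$,
\[
\frac{\del}{\eta K}=\frac{L\del\sqrt d}{H u_1K}=\sqrt d\,\frac{L\del}{H}\Bigl(K^{-1/2}+(c/K)^{1/3}\Bigr).
\]
\item Using $u_1\le K^{-1/2}$,
\[
\frac{L\eta d}{2}=\frac{H u_1\sqrt d}{2}\le \sqrt d\,\frac H2\,K^{-1/2}.
\]
\end{itemize}

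Summing the three pieces and grouping the $K^{-1/2}$ and $(c/K)^{1/3}$ coefficients reproduces \eqref{eq:rate-l1} exactly. The $O(\cdot)$ form follows by treating $L$, $H$ and $\del$ as constants and substituting $c=(1+\omega)/n$.

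There is no real obstacle here. The only point needing care is that the initialization term stays at order $cH^2u_1$. This requires using the ceiling in $B_0$ only through the lower bound $B_0\ge1/(u_1^2K)$, and never through an upper bound.
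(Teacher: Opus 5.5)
Your proposal is correct and follows the paper's proof essentially line for line. Both use the same bound $\overline E_K\le 5cH^2u_1$, the identity $1/(u_1K)=K^{-1/2}+(c/K)^{1/3}$, and the inequalities $u_1\le K^{-1/2}$ and $u_1\le c^{-1/3}K^{-2/3}$; the only cosmetic difference is that you apply the last inequality to $\overline E_K$ before taking the square root, whereas the paper applies it afterwards as $\sqrt{cu_1}\le c^{1/3}K^{-1/3}$.
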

\textbf{Remark:} To achieve $\E\normone{\nabla f(x_\tau)} \leq \epsilon$, the required number of model updates is 
\begin{equation*}
K=O\!\left(1+\frac d{\epsilon^2}
              +\frac{ad^{3/2}}{n\epsilon^3}\right).
\end{equation*}

\subsection{The \texorpdfstring{$\ell_2$}{l2} guarantee for DVR-Q}
\label{sec:stochastic-l2}
The unbiased compressor $Q(z_t)$ has conditional mean $z_t$, and its
conditional second moment is at most $(1+\omega)\norm{z_t}^2$. The descent analysis therefore
controls the \emph{squared} gradient norm and retains a negative
tracker-norm term. This term absorbs the tracking error caused by moving
the model, which is the key difference from the sign analysis.

\begin{theorem}\label{thm:master-l2}
Under Assumptions~\ref{ass:oracle} and~\ref{ass:compressor}, for any $\eta>0$, $\beta\in(0,1]$, and integers
$K,B_0\ge1$,  the DVR-Q variant of Algorithm~\ref{alg:ucsvr} satisfies
\begin{align}
&\E\norm{\nabla f(x_\tau)}^2
+\left(1-La\eta-\frac{2caL^2\eta^2}{\beta}\right)
   \frac1K\sum_{t=1}^K\E\norm{z_t}^2\nonumber\\
&\hspace{18mm}\le\frac{2\del}{\eta K}
+\frac{cH^2}{B_0\beta K}+2c\beta H^2.
\label{eq:master-l2}
\end{align}
In particular, if $La\eta\le1/2$ and
$\beta\ge4caL^2\eta^2$, the tracker-norm term on the left is
nonnegative and can be dropped.
\end{theorem}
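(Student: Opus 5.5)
We combine a descent inequality for the randomized step $s_t=Q(z_t)$ with a tracking recursion for $E_t$. This recursion plays the role of \eqref{eq:tracking}, but the model movement is now random with second moment $a\norm{z_{t-1}}^2$ rather than the deterministic $\eta^2 d$.

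\textbf{Descent step.} By $L$-smoothness of $f$ and \eqref{eq:l2-moments}, conditioning on $\mathcal G_t$ gives
\[
\E f(x_{t+1})\le \E f(x_t)-\eta\E\langle\nabla f(x_t),z_t\rangle+\tfrac{L\eta^2a}{2}\E\norm{z_t}^2 .
\]
Applying \eqref{eq:l2-alignment} with $g=\nabla f(x_t)$ gives
\[
\E f(x_{t+1})\le \E f(x_t)-\tfrac{\eta}{2}\E\norm{\nabla f(x_t)}^2-\tfrac{\eta}{2}(1-La\eta)\E\norm{z_t}^2+\tfrac{\eta}{2}E_t .
\]
Summing over $t=1,\dots,K$, telescoping against $f_*$, and multiplying by $2/(\eta K)$ yields
\[
\E\norm{\nabla f(x_\tau)}^2+(1-La\eta)\tfrac1K\sum_t\E\norm{z_t}^2\le \tfrac{2\del}{\eta K}+\overline E_K .
\]

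\textbf{Tracking step.} Since $\nabla f(x_t)=(1-\beta)\nabla f(x_{t-1})+\frac1n\sum_j\E_t h_t^j$, we have
\[
e_t=(1-\beta)e_{t-1}+\tfrac1n\sum_j\bigl(Q(h_t^j)-\E_t h_t^j\bigr).
\]
The noise term has zero conditional mean given the history through $z_{t-1}$ and $x_t$. The workers' samples and compressions are independent, so the cross terms vanish and
\[
E_t\le(1-\beta)^2E_{t-1}+\tfrac1{n^2}\sum_j\E\norm{Q(h_t^j)}^2 .
\]
By \eqref{eq:compressor-second} and \eqref{eq:update-moment}, this is at most $(1-\beta)E_{t-1}+\tfrac{a}{n}\bigl(2\beta^2H^2+2L^2\E\norm{x_t-x_{t-1}}^2\bigr)$. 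Now $\E\norm{x_t-x_{t-1}}^2=\eta^2\E\norm{s_{t-1}}^2\le a\eta^2\E\norm{z_{t-1}}^2$, so
\[
E_t\le(1-\beta)E_{t-1}+2c\beta^2H^2+2caL^2\eta^2\E\norm{z_{t-1}}^2 .
\]
For the initialization, the $nB_0$ compressed messages are independent and unbiased with second moment at most $aH^2$, which gives $E_1\le cH^2/B_0$.

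\textbf{Combining.} Summing the recursion over $t=2,\dots,K$ gives
\[
\beta\sum_t E_t\le E_1+2cK\beta^2H^2+2caL^2\eta^2\sum_{t=1}^{K}\E\norm{z_t}^2 .
\]
Dividing by $\beta K$,
\[
\overline E_K\le \tfrac{cH^2}{B_0\beta K}+2c\beta H^2+\tfrac{2caL^2\eta^2}{\beta}\tfrac1K\sum_t\E\norm{z_t}^2 .
\]
Substituting this into the descent bound and moving the $\E\norm{z_t}^2$ term to the left produces exactly the coefficient $1-La\eta-2caL^2\eta^2/\beta$ in \eqref{eq:master-l2}. The final remark is immediate: under $La\eta\le1/2$ and $\beta\ge4caL^2\eta^2$, the coefficient is at least $1-\tfrac12-\tfrac12=0$.

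\textbf{Main obstacle.} The delicate step is the conditioning in the tracking recursion. We must check that $x_t$ depends on the server randomness in $s_{t-1}$, yet is fixed before the step-$t$ samples and compressions are drawn. This is what makes the cross term with $(1-\beta)e_{t-1}$ vanish and lets \eqref{eq:update-moment} apply with the realized $\norm{x_t-x_{t-1}}^2$. The movement must then be bounded through $\E\norm{s_{t-1}}^2\le a\E\norm{z_{t-1}}^2$ and left as a $z$-term to be absorbed, rather than replaced by a worst-case bound.
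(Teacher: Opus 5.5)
Your proposal is correct and follows the paper's proof essentially step for step. It uses the same smoothness-plus-identity descent inequality that retains the $-(1-La\eta)\E\norm{z_t}^2$ term. It also uses the same centered tracking recursion $E_t\le(1-\beta)E_{t-1}+2c\beta^2H^2+2caL^2\eta^2\E\norm{z_{t-1}}^2$, obtained via $\E\norm{x_t-x_{t-1}}^2\le a\eta^2\E\norm{z_{t-1}}^2$, together with the same summation and absorption of the tracker-norm term. The only cosmetic difference is the order of specialization: the paper first proves the movement-dependent bound on $\overline E_K$ in general and then specializes it to DVR-Q, whereas you specialize before summing.
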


\begin{corollary}\label{cor:rate-l2}
Under the assumptions of Theorem~\ref{thm:master-l2}, choose
\begin{align}
r=\left(\frac K{n^2}\right)^{1/3},\qquad
\eta=\frac1{2La(1+r)},\quad
\beta=\frac1{n(1+r)^2},\quad B_0=\lceil1+r\rceil.
\label{eq:rate-parameters-l20}
\end{align}
Then DVR-Q satisfies
\begin{align}
\begin{split}
\E\norm{\nabla f(x_\tau)}
&\le\sqrt{4L\del+H^2}\sqrt{\frac aK}
  +\sqrt{4L\del+3H^2}\frac{\sqrt a}{(nK)^{1/3}}\\
&=O\!\left(\sqrt{\frac{1+\omega}{K}}
  +\frac{\sqrt{1+\omega}}{(nK)^{1/3}}\right).
\end{split}
\label{eq:rate-l2}
\end{align}
\end{corollary}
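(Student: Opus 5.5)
We take Theorem~\ref{thm:master-l2} as given and plug in the parameters \eqref{eq:rate-parameters-l20}. We then check its side conditions, bound each term on the right of \eqref{eq:master-l2}, and pass from the squared norm to the norm by Jensen, $\E\norm{\nabla f(x_\tau)}\le\sqrt{\E\norm{\nabla f(x_\tau)}^2}$, followed by $\sqrt{A+B}\le\sqrt A+\sqrt B$.

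\emph{Side conditions.} With $\eta=1/(2La(1+r))$ we have $La\eta=1/(2(1+r))\le1/2$. For the second condition, recall $c=a/n$, so
\[
4caL^2\eta^2=\frac{4a^2L^2}{n\cdot4L^2a^2(1+r)^2}=\frac1{n(1+r)^2}=\beta .
\]
Thus $\beta\ge4caL^2\eta^2$ holds with equality, and the tracker-norm term can be dropped. We also have $\beta\le1$ and $B_0\ge1$, so the theorem applies.

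\emph{The three terms.}
\begin{itemize}
\item First term: $2\del/(\eta K)=4La\del(1+r)/K$.
\item Second term: $\beta B_0\ge\frac{1+r}{n(1+r)^2}=\frac1{n(1+r)}$, hence $cH^2/(B_0\beta K)\le aH^2(1+r)/K$.
\item Third term: $2c\beta H^2=2aH^2/(n^2(1+r)^2)\le 2aH^2/(n^2r^2)$. Since $r^2=K^{2/3}n^{-4/3}$, this equals $2aH^2/(nK)^{2/3}$.
\end{itemize}
For the $r$-parts of the first two terms, $r/K=K^{-2/3}n^{-2/3}=(nK)^{-2/3}$. Summing everything,
\[
\E\norm{\nabla f(x_\tau)}^2\le(4L\del+H^2)\frac aK+(4L\del+H^2+2H^2)\frac a{(nK)^{2/3}} .
\]
Taking square roots gives exactly the constants $\sqrt{4L\del+H^2}$ and $\sqrt{4L\del+3H^2}$ in \eqref{eq:rate-l2}.

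\emph{Main obstacle.} The only delicate point is the ceiling in $B_0=\lceil1+r\rceil$. Using only $B_0\ge1+r$ is enough, and $1/(1+r)^2\le1/r^2$ handles the third term cleanly. No case split on whether $r<1$ is needed, because every inequality used holds for all $r>0$.
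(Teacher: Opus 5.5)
Your proposal is correct and follows essentially the same route as the paper. You verify $La\eta\le 1/2$ and $\beta=4caL^2\eta^2$ so the tracker term can be dropped, bound the three right-hand terms using $B_0\ge 1+r$, $r^2\le(1+r)^2$ and $r/K=(nK)^{-2/3}$, and finish with Jensen and $\sqrt{u+v}\le\sqrt u+\sqrt v$; your direct bound $1/(1+r)^2\le 1/r^2$ on the third term is the same inequality the paper applies after writing $1/n^2=r^3/K$.
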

\textbf{Remark:} To achieve $\E\norm{\nabla f(x_\tau)} \leq \epsilon$, the required number of model updates is 
\begin{equation}
K=O\!\left(1+\frac a{\epsilon^2}
                +\frac{a^{3/2}}{n\epsilon^3}\right).
\end{equation}
\section{The proposed methods for finite-sum problems}
\label{sec:finite-main}
For the finite-sum problem~\eqref{eq:fs-problem}, an occasional full pass
through the data can replace the stochastic correction used by DVR-Sign
and DVR-Q. We compute exact gradients
at checkpoints, followed by compressed component-gradient differences.
We use a deterministic sign for the $\ell_1$ criterion and employ the unbiased compressor $Q$ for the $\ell_2$ criterion.

Fix an integer refresh
period $q=m$. 
At times $t=1+kq$, every
worker computes and sends its local full gradient, and the
server resets $z_t=\nabla f(x_t)$. At other times, each worker independently draws
a uniform component $i_t^j\in\{1,\ldots,m\}$ and sends a compressed
paired difference, using the same component at both iterates. The server uses
\begin{equation}
y_t^j=\nabla f_{j,i_t^j}(x_t)-\nabla f_{j,i_t^j}(x_{t-1}),\qquad
z_t=z_{t-1}+\frac1n\sum_{j=1}^n Q(y_t^j).
\label{eq:fs-recursion}
\end{equation}

\begin{algorithm}[!htbp]
\caption{DVR-Sign-FS and DVR-Q-FS}
\label{alg:dvr-fs}
\begin{algorithmic}[1]
\Require $x_1$, $K$, $\eta$,
refresh period $q$, and compressor $Q$.
\For{$t=1,\ldots,K$}
  \If{$t=1+kq$ for an integer $k\ge0$}
    \State Each worker computes and sends $\nabla f_j(x_t)=m^{-1}\sum_{i=1}^m\nabla f_{j,i}(x_t)$.
    \State Server sets $z_t=n^{-1}\sum_{j=1}^n\nabla f_j(x_t)$.
  \Else
    \State Every worker $j$ draws an independent uniform $i_t^j\in\{1,\ldots,m\}$.
    \State Every worker forms $y_t^j$ in~\eqref{eq:fs-recursion} and sends $Q(y_t^j)$.
    \State Server sets $z_t=z_{t-1}+n^{-1}\sum_{j=1}^n Q(y_t^j)$.
  \EndIf
  \If{DVR-Sign-FS}
    \State Server broadcasts $s_t=\Sign(z_t)$.
  \Else
    \State Server broadcasts $s_t=Q(z_t)$.
  \EndIf
  \State Set $x_{t+1}=x_t-\eta s_t$.
\EndFor
\State Draw an independent uniform $\tau\in\{1,\ldots,K\}$ and \Return $x_\tau$.
\end{algorithmic}
\end{algorithm}
The server retains $z_t$ itself for the next estimator
update.
Component smoothness gives
$\norm{y_t^j}\le L\norm{x_t-x_{t-1}}$, which controls the increment
without requiring bounded gradients or bounded heterogeneity. The whole method is described in Algorithm~\ref{alg:dvr-fs}. 

\subsection{Deterministic sign updates for the \texorpdfstring{$\ell_1$}{l1} guarantee}
\label{sec:fs-l1}
The first version uses $s_t=\Sign(z_t)$. A refresh eliminates the estimation error and only
the $q-1$ intervening updates can contribute to its variance. A refresh costs $M$ component-gradient evaluations, whereas an ordinary
update costs $2n$. Choosing $q=m$ makes the amortized refresh cost
$M/q=n$, of the same order as the ordinary update cost.
\begin{theorem}
\label{thm:fs-master}
Under Assumptions~\ref{ass:compressor} and~\ref{ass:finite-sum}, let
$a=1+\omega$. For DVR-Sign-FS, define
\begin{equation}
J_1=\frac d2+2d\sqrt{\frac{a(q-1)}n}.
\label{eq:fs-geometry}
\end{equation}
For an independent uniform $\tau\in\{1,\ldots,K\}$,
\begin{align}
\E\norm{z_t-\nabla f(x_t)}^2
&\le\frac{aL^2\eta^2d(q-1)}n,
\label{eq:fs-error}\\
\E\normone{\nabla f(x_\tau)}
&\le\frac{\Delta_0}{\eta K}+L\eta J_1.
\label{eq:fs-master}
\end{align}
\end{theorem}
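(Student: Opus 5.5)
The proof has two parts: a martingale bound on the tracking error $e_t=z_t-\nabla f(x_t)$, followed by the sign-descent argument from Theorem~\ref{thm:master}.

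\emph{Step 1: the tracking error.} Fix $t$ and let $t_0=1+kq\le t$ be the most recent refresh, so that $e_{t_0}=0$. For $t_0<s\le t$, the recursion \eqref{eq:fs-recursion} gives
\[
e_s=e_{s-1}+\xi_s,\qquad \xi_s=\frac1n\sum_{j=1}^n Q(y_s^j)-\bigl(\nabla f(x_s)-\nabla f(x_{s-1})\bigr).
\]
Condition on the history $\mathcal F_{s-1}$, which contains $x_s$ because $x_s$ is fixed by $s_{s-1}$. Uniform sampling of $i_s^j$ together with the unbiasedness of $Q$ gives $\E[\xi_s\mid\mathcal F_{s-1}]=0$. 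Hence $e_t=\sum_{s=t_0+1}^t\xi_s$ is a martingale sum, and
\[
\E\norm{e_t}^2=\sum_s\E\norm{\xi_s}^2 .
\]

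Next bound each summand. The $n$ workers are conditionally independent and each term is centered. For any vector $w$, the variance of $Q(y_s^j)$ is at most $\E\norm{Q(y_s^j)-w}^2$, and by \eqref{eq:compressor-second} this is at most $\E\norm{Q(y_s^j)}^2\le a\,\E\norm{y_s^j}^2$. Therefore
\[
\E\norm{\xi_s}^2\le\frac{1}{n^2}\sum_j a\,\E\norm{y_s^j}^2 .
\]
Component smoothness \eqref{eq:fs-smooth} gives $\norm{y_s^j}^2\le L^2\norm{x_s-x_{s-1}}^2$. The step is $x_s-x_{s-1}=-\eta\Sign(z_{s-1})$, so $\norm{x_s-x_{s-1}}^2\le\eta^2 d$. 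Each summand is thus at most $aL^2\eta^2d/n$, and there are at most $q-1$ of them, which gives \eqref{eq:fs-error}.

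\emph{Step 2: descent.} By $L$-smoothness of $f$ (which follows from component smoothness),
\[
f(x_{t+1})\le f(x_t)-\eta\langle\nabla f(x_t),\Sign(z_t)\rangle+\frac{L\eta^2d}{2}.
\]
Applying \eqref{eq:alignment} yields
\[
f(x_{t+1})\le f(x_t)-\eta\normone{\nabla f(x_t)}+2\eta\sqrt d\norm{e_t}+\frac{L\eta^2 d}{2}.
\]
Take expectations and use Jensen: $\E\norm{e_t}\le\sqrt{aL^2\eta^2d(q-1)/n}=L\eta\sqrt{d\,a(q-1)/n}$. The error term then becomes $2\eta\sqrt d\cdot L\eta\sqrt{d a(q-1)/n}=2L\eta^2d\sqrt{a(q-1)/n}$. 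Summing over $t=1,\dots,K$, telescoping with $f(x_1)-f(x_{K+1})\le\Delta_0$ (the initial gap), and dividing by $\eta K$ gives \eqref{eq:fs-master} with $J_1=d/2+2d\sqrt{a(q-1)/n}$.

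\emph{Main obstacle.} The delicate point is the martingale orthogonality in Step~1. The step size $\eta\Sign(z_{s-1})$ depends on past compression noise, but it is $\mathcal F_{s-1}$-measurable. Because the bound $\norm{x_s-x_{s-1}}^2\le\eta^2d$ holds deterministically, no coupling issue arises. I must also make sure the conditioning includes the server randomness of earlier rounds. That randomness is trivial here, since $\Sign$ is deterministic, so the argument goes through.
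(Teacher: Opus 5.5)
Your proposal is correct and is essentially the paper's proof. The paper unrolls the one-step conditional bound $\E_t\norm{e_t}^2\le\norm{e_{t-1}}^2+\frac{aL^2}{n}\norm{x_t-x_{t-1}}^2$ from the last refresh (Lemma~\ref{lemma:fs-tracking}) instead of invoking martingale orthogonality of the increments, which is the same computation, and the descent step (smoothness, the sign-alignment inequality, Jensen, telescoping) is identical to yours.
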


\begin{corollary}
\label{cor:fs-complexity-l1}
Under the assumptions of Theorem~\ref{thm:fs-master}, set $q=m$. We ensure
$\E\normone{\nabla f(x_\tau)}\le\epsilon$, with
\begin{align}
\eta=\frac{\epsilon}{2LJ_1},\quad
K=&O\!\left(1+\frac{L\Delta_0d}{\epsilon^2}
                         \left[1+\sqrt{\frac{a(m-1)}n}\right]\right).\\
N_{\rm grad}
=&O\!\left(M+\frac{L\Delta_0d}{\epsilon^2}[n+\sqrt{aM}]\right).
\label{eq:fs-gradient-complexity-l1}
\end{align}
\end{corollary}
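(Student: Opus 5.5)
The plan is to take Theorem~\ref{thm:fs-master} with $q=m$ and choose $\eta$ so that the two terms of \eqref{eq:fs-master} each contribute at most $\epsilon/2$. Then I count component-gradient evaluations.

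\textbf{Step 1: the step size.} With $\eta=\epsilon/(2LJ_1)$, the second term of \eqref{eq:fs-master} is $L\eta J_1=\epsilon/2$ exactly. The first term satisfies
\[
\frac{\Delta_0}{\eta K}=\frac{2LJ_1\Delta_0}{\epsilon K}\le\frac{\epsilon}{2}
\quad\text{whenever}\quad
K\ge\frac{4LJ_1\Delta_0}{\epsilon^2}.
\]
So it suffices to take $K=\lceil 4LJ_1\Delta_0\epsilon^{-2}\rceil$.

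\textbf{Step 2: the iteration count.} Substituting $J_1=\tfrac d2+2d\sqrt{a(m-1)/n}$ gives $J_1\le 2d\,[1+\sqrt{a(m-1)/n}]$. Hence
\[
K\le 1+\frac{8L\Delta_0 d}{\epsilon^2}\Bigl[1+\sqrt{\tfrac{a(m-1)}{n}}\Bigr],
\]
and the ceiling accounts for the additive $1$ in the stated bound on $K$.

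\textbf{Step 3: the gradient complexity.} The iterations split into two kinds.
\begin{itemize}
\item Refresh iterations occur at $t=1+km$. There are at most $\lceil K/m\rceil\le 1+K/m$ of them, and each costs $nm=M$ evaluations. Their total is at most $M+Kn$.
\item Every other iteration costs $2n$ evaluations, one paired difference per worker. Their total is at most $2nK$.
\end{itemize}
Together, $N_{\rm grad}\le M+3nK$.

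Plugging in the bound on $K$ and using $n\sqrt{a(m-1)/n}\le\sqrt{anm}=\sqrt{aM}$ gives
\[
N_{\rm grad}=O\!\left(M+n+\frac{L\Delta_0 d}{\epsilon^2}\,[\,n+\sqrt{aM}\,]\right).
\]
The extra $n$, which comes from the ceiling in $K$, is absorbed because $n\le M$.

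\textbf{Main obstacle.} The only real care needed is counting refreshes: the first iteration is always a full pass, which is why $M$ appears additively instead of being amortized. No step is deep. The substantive work lies in Theorem~\ref{thm:fs-master}, which is assumed here.
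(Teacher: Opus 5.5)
Your proposal is correct and follows the paper's argument essentially line by line. It uses the same $\eta=\epsilon/(2LJ_1)$, the same horizon $K\approx 4L\Delta_0J_1/\epsilon^2$, the same refresh count $\lceil K/m\rceil\le 1+K/m$ giving $N_{\rm grad}\le M+3nK$, and the same absorption of the extra $n$ from the ceiling via $n\le M$. One cosmetic point: take $K=\max\{1,\lceil 4LJ_1\Delta_0\epsilon^{-2}\rceil\}$, so that the horizon is a positive integer even when $\Delta_0=0$; the paper handles this through the additive $1$ in its bound $K\le 1+4L\Delta_0J_1/\epsilon^2$.
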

\textbf{Remark:}
For $n\leq O(a m)$, the total oracle order is $N_{\rm grad}=O\!\left(M+\frac{d\sqrt{aM}}{\epsilon^2}\right)$.

\subsection{Unbiased compressed updates for the \texorpdfstring{$\ell_2$}{l2} guarantee}
\label{sec:fs-l2}
The second version keeps the same refreshes and estimator, but uses
$s_t=Q(z_t)$, where $Q$ satisfies
Assumption~\ref{ass:compressor} with $a=1+\omega$.
Unlike a sign step of fixed length, this update has conditional second
moment at most $a\eta^2\norm{z_t}^2$. Unbiasedness also makes the
expected descent depend on $\langle\nabla f(x_t),z_t\rangle$.
These two facts allow the tracking error to be absorbed into the descent
inequality and yield a squared Euclidean stationarity guarantee.

\begin{theorem}
\label{thm:fs-master-l2}
Under Assumptions~\ref{ass:compressor} and~\ref{ass:finite-sum}, for $a=1+\omega$, define
\begin{equation}
J_Q=a\left(1+\sqrt{\frac{q-1}{n}}\right),
\qquad \eta=\frac1{2LJ_Q}.
\label{eq:fs-geometry-l2}
\end{equation}
Then an independent uniform
$\tau\in\{1,\ldots,K\}$ satisfies
\begin{equation}
\E\norm{\nabla f(x_\tau)}^2
\le\frac{4L\Delta_0J_Q}{K},\qquad
\E\norm{\nabla f(x_\tau)}
\le2\sqrt{\frac{L\Delta_0J_Q}{K}}.
\label{eq:fs-master-l2}
\end{equation}
\end{theorem}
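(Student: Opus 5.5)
The plan is to combine a one-step descent inequality for the unbiased broadcast $s_t=Q(z_t)$ with a within-epoch martingale bound on the tracking error $e_t=z_t-\nabla f(x_t)$. The tracking error will be charged to past tracker norms $\E\norm{z_s}^2$, and these are then absorbed by the negative $\norm{z_t}^2$ term that the identity \eqref{eq:l2-alignment} produces. Throughout, $\Delta_0=f(x_1)-f_*$.

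\textbf{Step 1 (descent).} Assumption~\ref{ass:finite-sum} makes $f$ $L$-smooth, so
\[
f(x_{t+1})\le f(x_t)-\eta\langle\nabla f(x_t),s_t\rangle+\tfrac{L\eta^2}{2}\norm{s_t}^2 .
\]
I take the conditional expectation over the fresh server compression, using \eqref{eq:compressor-second}: the mean of $s_t$ is $z_t$ and its second moment is at most $a\norm{z_t}^2$. Then I apply \eqref{eq:l2-alignment} with $g=\nabla f(x_t)$, $z=z_t$. This gives
\[
\E f(x_{t+1})\le \E f(x_t)-\tfrac\eta2\E\norm{\nabla f(x_t)}^2-\tfrac\eta2(1-La\eta)\E\norm{z_t}^2+\tfrac\eta2\E\norm{e_t}^2 .
\]
Summing over $t=1,\dots,K$ and using $f\ge f_*$ yields
\[
\sum_t\E\norm{\nabla f(x_t)}^2+(1-La\eta)\sum_t\E\norm{z_t}^2\le \frac{2\Delta_0}{\eta}+\sum_t\E\norm{e_t}^2 .
\]

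\textbf{Step 2 (tracking error).} At a refresh time $e_t=0$. Otherwise \eqref{eq:fs-recursion} gives
\[
e_t=e_{t-1}+\frac1n\sum_{j=1}^n\bigl(Q(y_t^j)-[\nabla f_j(x_t)-\nabla f_j(x_{t-1})]\bigr).
\]
I condition on everything up to and including the server's draw of $s_{t-1}$, so that $x_t$ and $e_{t-1}$ are fixed. With respect to the fresh component indices and the worker compressions, each summand then has mean zero, because the uniform $i_t^j$ averages to $\nabla f_j$ and $Q$ is unbiased. The summands are also independent across workers. Hence the cross terms vanish, and the variance of each summand is at most its second moment:
\[
\E\norm{e_t}^2\le\E\norm{e_{t-1}}^2+\frac1{n^2}\sum_j a\,\E\norm{y_t^j}^2\le \E\norm{e_{t-1}}^2+\frac{aL^2}{n}\E\norm{x_t-x_{t-1}}^2 .
\]
Since $\E\norm{x_t-x_{t-1}}^2=\eta^2\E\norm{s_{t-1}}^2\le a\eta^2\E\norm{z_{t-1}}^2$, unrolling back to the last refresh gives $\E\norm{e_t}^2\le\frac{a^2L^2\eta^2}{n}\sum_{s}\E\norm{z_s}^2$. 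Here $s$ ranges over the earlier iterates of the same epoch. Each $z_s$ appears in at most $q-1$ later error terms, so summing over all epochs, including a possibly truncated last one, gives
\[
\sum_t\E\norm{e_t}^2\le\frac{a^2L^2\eta^2(q-1)}{n}\sum_t\E\norm{z_t}^2 .
\]
The conditioning order is the step I expect to need the most care. The server's randomness at $t-1$ must be integrated out only after the zero-mean property at time $t$ has been used, and the same-component pairing is what yields $\norm{y_t^j}\le L\norm{x_t-x_{t-1}}$ with no heterogeneity term.

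\textbf{Step 3 (absorption and conclusion).} Substituting Step 2 into Step 1 leaves the coefficient
\[
1-La\eta-\frac{a^2L^2\eta^2(q-1)}{n}
\]
on $\sum_t\E\norm{z_t}^2$. Set $\sigma=\sqrt{(q-1)/n}$. With $\eta=1/(2LJ_Q)$ we have $La\eta=\frac1{2(1+\sigma)}\le\frac12$ and $\frac{a^2L^2\eta^2(q-1)}{n}=\frac{\sigma^2}{4(1+\sigma)^2}\le\frac14$. The coefficient is therefore at least $1/4>0$, and that term can be dropped. What remains is
\[
\frac1K\sum_t\E\norm{\nabla f(x_t)}^2\le\frac{2\Delta_0}{\eta K}=\frac{4L\Delta_0J_Q}{K},
\]
which is the first bound in \eqref{eq:fs-master-l2}, because $\tau$ is uniform and independent of the iterates. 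Jensen's inequality, $\E\norm{\nabla f(x_\tau)}\le\sqrt{\E\norm{\nabla f(x_\tau)}^2}$, then gives the second bound.
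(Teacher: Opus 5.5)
Your proposal is correct and follows the paper's proof essentially step for step. It uses the same conditional descent inequality combined with the identity $-2\langle g,z\rangle=-\norm g^2-\norm z^2+\norm{z-g}^2$, the same within-epoch tracking bound (the paper's Lemma~\ref{lemma:fs-tracking}, charging each $\E\norm{z_s}^2$ at most $q-1$ times), and the same absorption of the tracker-norm term. The only cosmetic difference is that you lower-bound the bracket by $1/4$, whereas the paper computes $La\eta+a^2L^2\eta^2(q-1)/n=\tfrac14+\tfrac1{4(1+s)^2}\le\tfrac12$; any positive lower bound suffices.
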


\begin{corollary}
\label{cor:fs-complexity-l2}
Under the assumptions of Theorem~\ref{thm:fs-master-l2}, set $q=m$. We ensure
$\E\norm{\nabla f(x_\tau)}\le\epsilon$, with
\begin{align}
K&=O\!\left(1+\frac{aL\Delta_0}{\epsilon^2}
                         \left[1+\sqrt{\frac{m-1}{n}}\right]\right),
\label{eq:fs-update-complexity-l2}\\
N_{\rm grad}
&=O\!\left(M+\frac{aL\Delta_0}{\epsilon^2}
                                      [n+\sqrt M]\right).
\label{eq:fs-gradient-complexity-l2}
\end{align}
\end{corollary}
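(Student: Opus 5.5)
Corollary~\ref{cor:fs-complexity-l2} follows from Theorem~\ref{thm:fs-master-l2} by choosing $K$ and then counting oracle calls. By \eqref{eq:fs-master-l2} we have $\E\norm{\nabla f(x_\tau)}\le2\sqrt{L\Delta_0J_Q/K}$. This is at most $\epsilon$ as soon as $K\ge 4L\Delta_0J_Q/\epsilon^2$, so we set
\[
K=\left\lceil\frac{4L\Delta_0J_Q}{\epsilon^2}\right\rceil .
\]
With $q=m$ we have $J_Q=a(1+\sqrt{(m-1)/n})$. Using $\lceil x\rceil\le 1+x$ then gives \eqref{eq:fs-update-complexity-l2} directly. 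The step size $\eta=1/(2LJ_Q)$ is the one fixed in the theorem and does not depend on $\epsilon$, so no further tuning is needed.

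For the gradient count, we split the $K$ iterations into refresh steps ($t=1+kq$) and ordinary steps.
\begin{itemize}
\item There are $\lceil K/q\rceil\le 1+K/m$ refresh steps. Each costs $M=nm$ component gradients, so together they cost at most $M+Kn$.
\item Each ordinary step costs $2n$ component gradients, since every worker evaluates one component at two iterates. Together they cost at most $2Kn$.
\end{itemize}
Hence $N_{\rm grad}\le M+3nK$. Substituting the bound on $K$ gives
\[
N_{\rm grad}=O\!\left(M+n+\frac{aL\Delta_0}{\epsilon^2}\bigl(n+\sqrt{n(m-1)}\bigr)\right).
\]
Since $\sqrt{n(m-1)}\le\sqrt{nm}=\sqrt M$ and $n\le M$, this simplifies to \eqref{eq:fs-gradient-complexity-l2}.

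No step here is a real obstacle. The only points needing care are the ceiling in the number of refreshes, which produces the additive $M$, and the additive $n$ coming from the ``$1+$'' in $K$, which is absorbed into $M$.
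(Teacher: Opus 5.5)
Your proposal is correct and follows the paper's argument essentially step for step. It uses the same cost bound $N_{\rm grad}\le M+3nK$ obtained from $\lceil K/m\rceil\le 1+K/m$ refreshes, substitutes $K\le 1+4L\Delta_0J_Q/\epsilon^2$, and absorbs terms via $\sqrt{n(m-1)}\le\sqrt M$ and $n\le M$. The only nit is the edge case $\Delta_0=0$: take $K=\max\{1,\lceil 4L\Delta_0J_Q/\epsilon^2\rceil\}$ so that $K\ge1$, which leaves the bound $K\le 1+4L\Delta_0J_Q/\epsilon^2$ unchanged.
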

\textbf{Remark:} For $n\leq O(\sqrt m)$, this total oracle order $N_{\rm grad}=O\!\left(M+\frac{a\sqrt{M}}{\epsilon^2}\right)$ matches the
centralized Euclidean benchmark of SPIDER and PAGE
\citep{fang2018spider,li2021page}. 

\vspace{0.3in}

\clearpage
\bibliography{references}
\bibliographystyle{plainnat}

\clearpage
\appendix
\numberwithin{theorem}{section}
\numberwithin{assumption}{section}
\numberwithin{equation}{section}
\section{Proof of Proposition~\ref{prop:majority-floor}}
\label{app:majority}
We first prove Proposition~\ref{prop:majority-floor} for the SSVR-MV Option~1. 
Use the three one-dimensional functions
\begin{equation}
f_1(x)=f_2(x)=\tfrac12\log\cosh x+\tfrac14x,
\qquad f_3(x)=\tfrac12\log\cosh x-\tfrac12x.
\end{equation}
Since $\cosh x\ge e^{|x|}/2$, we have
$\log\cosh x\ge |x|-\log2$. It follows that
\[
f_1(x)=f_2(x)\ge\tfrac12|x|+\tfrac14x-\tfrac12\log2
\ge-\tfrac12\log2,
\quad
f_3(x)\ge\tfrac12(|x|-x)-\tfrac12\log2\ge-\tfrac12\log2.
\]
Thus each local objective is lower bounded. Their derivatives are
\[
f_1'(x)=f_2'(x)=\tfrac12\tanh x+\tfrac14,\qquad
f_3'(x)=\tfrac12\tanh x-\tfrac12.
\]
Because $|\tanh x|\le1$, all three derivatives have absolute value at
most $G=1$. Moreover, $f_j''(x)=1/(2\cosh^2x)\in(0,1/2]$,
so local objectives are $1/2$-smooth. Their average and its derivative are
\[
f(x)=\tfrac12\log\cosh x,\qquad h(x):=f'(x)=\tfrac12\tanh x.
\]
In particular, we observe that $x_1=0$ is a global minimizer and $|h(x)|\le1/2$.
Take the deterministic oracle $g_j(x;\xi)=f_j'(x)$ whose variance is zero.

Exact initialization gives $v_1^j=f_j'(x_1)$. If
$v_{t-1}^j=f_j'(x_{t-1})$, the local recursion gives
\[
v_t^j=f_j'(x_t)+(1-\beta)
       [f_j'(x_{t-1})-f_j'(x_{t-1})]=f_j'(x_t).
\]
Induction therefore proves exact local estimates at every iteration
for every $\beta\in(0,1]$. The signs $q_t^j=S_4(f_j'(x_t))$ are well defined because
$|f_j'(x_t)|\le1<4$. 

Next, to keep the following scalar calculation short, write $u=1/16$ and
$z=h(x)/4$. The three sign means at $x_t=x$ are
$z+u,z+u,z-2u$. Expanding the product,
\[
(z+u)^2(z-2u)
=(z^2+2uz+u^2)(z-2u)=z^3-3u^2z-2u^3.
\]
Consequently the expected vote, denoted by $M(x)$, is
\begin{equation}
M(x):=\E[s_t\mid x_t=x]
=\frac{3z-(z+u)^2(z-2u)}2
=\frac{3(1+u^2)z-z^3+2u^3}{2}.
\label{eq:majority-conditional}
\end{equation}
At the minimizer, $h(0)=z=0$ and $M(0)=u^3>0$.
The following derivative bound controls how much this bias can change
when the true gradient is small:
\begin{equation}
\frac{dM}{dh}=\frac38(1+u^2-z^2),\qquad
0<\frac{dM}{dh}\le\frac38(1+u^2)=\frac{771}{2048}.
\label{eq:majority-slope}
\end{equation}
Indeed $|z|=|h|/4\le1/8$, so $1+u^2-z^2>0$.
The mean value theorem now gives
\begin{equation}
|M(x)-u^3|\le\frac{771}{2048}|h(x)|.
\label{eq:majority-lipschitz}
\end{equation}
Next, we control the iterate without assuming it is bounded.
Since $\tanh1>1/2$, $x\ge1$ implies that $z\ge u$, and $x\le-1$
implies $z\le-u$. The polynomial in \eqref{eq:majority-conditional}
is increasing in $z$ on $[-1/8,1/8]$.
Its values at $u$ and $-u$ are, respectively,
$(3u+4u^3)/2>0$ and $-3u/2<0$. Thus $xM(x)\ge0$ for $|x|\ge1$.
For $|x|<1$, the fact that $M(x)$ is the expectation of a sign gives
$|M(x)|\le1$ and hence $xM(x)\ge-1$. Combining the two regions yields
$xM(x)\ge-1$ for every $x$.

The update $x_{t+1}=x_t-\eta s_t$ and $s_t^2=1$ give
\[
\E[x_{t+1}^2\mid x_t]
=x_t^2-2\eta x_tM(x_t)+\eta^2.
\]
Take expectations and sum over $t=1,\ldots,K$. Since $x_1=0$,
\begin{equation}
\E x_{K+1}^2
=\sum_{t=1}^K\bigl(-2\eta\E[x_tM(x_t)]+\eta^2\bigr)
\le2\eta K+\eta^2K.
\label{eq:majority-iterate}
\end{equation}
All moments are finite, since for a fixed horizon,
$|x_t|\le(t-1)\eta$ on every sample path.

Taking expectations gives
$\E x_{t+1}-\E x_t=-\eta\E M(x_t)$.
After summing and using $x_1=0$,
\[
\frac1K\sum_{t=1}^K\E M(x_t)=-\frac{\E x_{K+1}}{\eta K}.
\]
Cauchy--Schwarz and \eqref{eq:majority-iterate} imply
\begin{equation}
\left|\frac1K\sum_{t=1}^K\E M(x_t)\right|
\le\frac{\sqrt{\E x_{K+1}^2}}{\eta K}
\le\sqrt{\frac2{\eta K}+\frac1K}.
\label{eq:majority-average}
\end{equation}
This argument controls the average vote even though individual iterates
need not converge.

Finally, the triangle
inequality and \eqref{eq:majority-average} give
\begin{align*}
u^3
&\le\left|\frac1K\sum_{t=1}^K\E M(x_t)\right|
  +\left|u^3-\frac1K\sum_{t=1}^K\E M(x_t)\right|\\
&\le\sqrt{\frac2{\eta K}+\frac1K}
  +\frac1K\sum_{t=1}^K\E|u^3-M(x_t)|\\
&\le\sqrt{\frac2{\eta K}+\frac1K}
  +\frac{771}{2048}\frac1K\sum_{t=1}^K\E|h(x_t)|.
\end{align*}
By independent uniform output selection, the final average is
$\E|f'(x_\tau)|$. Since $u^3=1/4096$, rearranging gives the finite-horizon bound
\begin{equation}
\E|f'(x_\tau)|
\ge\frac{2048}{771}
\left[\frac1{4096}-\sqrt{\frac2{\eta K}+\frac1K}\right].
\end{equation}
For $\eta=\Theta(K^{-1/2})$, we have $\eta K\to\infty$,
so the square-root term tends to zero. Taking the lower limit and using
$2048/(771\cdot4096)=1/1542$ proves
\eqref{eq:majority-floor-main}.

\section{Proofs for the stochastic results}
\label{app:general}
We first prove a tracking estimate that retains the actual model
movement.
Throughout, $a=1+\omega$, $c=a/n$, $K$ counts model updates, and $B_0$
is the initialization batch per worker.

\subsection{Moments and server tracking}
\label{app:proofs}
For $t\ge2$, let $\mathcal F_{t-1}$ contain all randomness generated
before the current worker samples and compression calls. Write
$\E_t[\cdot]=\E[\cdot\mid\mathcal F_{t-1}]$. First, unbiased compression controls second moments. Conditional on an
input $v$, expansion of the squared norm gives
\begin{align}
\E[\norm{Q(v)}^2\mid v]
=\norm v^2+2\langle v,\E[Q(v)-v\mid v]\rangle
  +\E[\norm{Q(v)-v}^2\mid v]\nonumber\le a\norm v^2.
\end{align}
The middle term is zero by unbiasedness. Applying
$\norm{u+v}^2\le2\norm u^2+2\norm v^2$ to the decomposition
in~\eqref{eq:update}, we obtain
\begin{align}
\E_t\norm{h_t^j}^2
&\le2\beta^2\E_t\norm{g_j(x_t;\xi_t^j)}^2
 +2(1-\beta)^2\E_t
   \norm{g_j(x_t;\xi_t^j)-g_j(x_{t-1};\xi_t^j)}^2\nonumber\\
&\le2\beta^2H^2+2L^2\norm{x_t-x_{t-1}}^2.
\label{eq:conditional-update}
\end{align}
The last inequality uses mean-squared smoothness and $(1-\beta)^2\le1$.

Next, we control the initial error. Write
$Y_{j,b}=Q(g_j(x_1;\xi_{1,b}^j))$ for the message from sample $b$ at
worker $j$. Repeated conditioning gives
\[
\E Y_{j,b}=\nabla f_j(x_1),\qquad
\E\norm{Y_{j,b}}^2\le aH^2.
\]
Its centered second moment therefore satisfies
\[
\E\norm{Y_{j,b}-\nabla f_j(x_1)}^2
=\E\norm{Y_{j,b}}^2-\norm{\nabla f_j(x_1)}^2\le aH^2.
\]
Distinct initialization messages use independent samples and compression
calls. For $(j,b)\ne(k,r)$, their centered cross term is consequently
\[
\E\langle Y_{j,b}-\nabla f_j(x_1),
          Y_{k,r}-\nabla f_k(x_1)\rangle=0.
\]
Expanding the squared norm of the average gives
\begin{equation}
    E_1
=\frac1{n^2B_0^2}\sum_{j=1}^n\sum_{b=1}^{B_0}
  \E\norm{Y_{j,b}-\nabla f_j(x_1)}^2\nonumber\le\frac{nB_0aH^2}{n^2B_0^2}=\frac{cH^2}{B_0}.
\end{equation}
This explains why each initialization sample is compressed separately. We now derive the tracking recursion. For $t\ge2$, define the centered
decoded error
\[
\delta_t^j
=Q(h_t^j)-[\nabla f_j(x_t)-(1-\beta)\nabla f_j(x_{t-1})].
\]
Unbiasedness of the oracle and of the compressor implies
\[
\E_t Q(h_t^j)=\E_t h_t^j
=\nabla f_j(x_t)-(1-\beta)\nabla f_j(x_{t-1}),
\qquad \E_t\delta_t^j=0.
\]
Subtracting $\nabla f(x_t)$ from~\eqref{eq:server-recursion} gives the
exact identity
\begin{equation}
e_t=(1-\beta)e_{t-1}+\frac1n\sum_{j=1}^n\delta_t^j.
\label{eq:error-identity}
\end{equation}
The conditional squared norm expands as
\begin{align}
\E_t\norm{e_t}^2
&=(1-\beta)^2\norm{e_{t-1}}^2
 +\frac{2(1-\beta)}n\sum_j
       \langle e_{t-1},\E_t\delta_t^j\rangle\nonumber\\
&\quad+\frac1{n^2}\sum_j\E_t\norm{\delta_t^j}^2
 +\frac2{n^2}\sum_{j<k}\E_t\langle\delta_t^j,\delta_t^k\rangle.
\end{align}
The second term is zero because $e_{t-1}$ is fixed under conditioning.
The last term is zero because the current worker errors are conditionally
independent and centered.
For each remaining variance, centering and~\eqref{eq:conditional-update}
give
\begin{align}
\E_t\norm{\delta_t^j}^2
&=\E_t\norm{Q(h_t^j)}^2
  -\norm{\nabla f_j(x_t)-(1-\beta)\nabla f_j(x_{t-1})}^2\nonumber\\
&\le a\E_t\norm{h_t^j}^2
\le2a\beta^2H^2+2aL^2\norm{x_t-x_{t-1}}^2.
\end{align}
Taking full expectations and using $(1-\beta)^2\le1-\beta$ yields
\begin{equation}
E_t\le(1-\beta)E_{t-1}+2c\beta^2H^2
                  +2cL^2\E\norm{x_t-x_{t-1}}^2.
\label{eq:tracking-movement}
\end{equation}

For completeness, summing this recursion from $t=2$ to $K$ gives
\begin{align}
\beta\sum_{t=1}^K E_t
&\le E_1-(1-\beta)E_K+2c\beta^2H^2(K-1)
       +2cL^2\sum_{t=1}^{K-1}\E\norm{x_{t+1}-x_t}^2\nonumber\\
&\le E_1+2c\beta^2H^2K
       +2cL^2\sum_{t=1}^{K-1}\E\norm{x_{t+1}-x_t}^2.
\end{align}
After division by $\beta K$, this proves
\begin{equation}
\overline E_K\le\frac{cH^2}{B_0\beta K}+2c\beta H^2
 +\frac{2cL^2}{\beta K}
   \sum_{t=1}^{K-1}\E\norm{x_{t+1}-x_t}^2.
\label{eq:average-movement}
\end{equation}
For DVR-Sign,
$\norm{x_{t+1}-x_t}^2=\eta^2d$. Substituting this identity
into~\eqref{eq:tracking-movement} and~\eqref{eq:average-movement}
proves~\eqref{eq:tracking} and~\eqref{eq:average-error}. The DVR-Q
specialization is given separately in Appendix~\ref{app:l2}.

\subsection{The \texorpdfstring{$\ell_1$}{l1} guarantee}
\label{app:stochastic-l1}
Let $g$ be a true gradient and $z$ an arbitrary estimate. If
$\Sign(g_k)\ne\Sign(z_k)$ and $g_k\ne0$, then $z_k$ is on the opposite
side of zero or is zero, and therefore $|g_k|\le|g_k-z_k|$.
Coordinates with $g_k=0$ contribute zero. It follows that
\begin{align}
\normone g-\langle g,\Sign(z)\rangle
&=2\sum_{k=1}^d |g_k|\ind{\Sign(g_k)\ne\Sign(z_k)}\nonumber\\
&\le2\sum_{k=1}^d |g_k-z_k|
=2\normone{g-z}\le2\sqrt d\norm{g-z}.
\label{eq:unified-alignment}
\end{align}
Next, we sum the objective decreases.
Mean-squared smoothness and Jensen's inequality give
\[
\norm{\nabla f_j(x)-\nabla f_j(y)}
=\norm{\E[g_j(x;\xi)-g_j(y;\xi)]}
\le\bigl(\E\norm{g_j(x;\xi)-g_j(y;\xi)}^2\bigr)^{1/2}
\le L\norm{x-y}.
\]
Averaging over workers shows that $f$ is also $L$-smooth. Applying its
smoothness inequality to the deterministic sign update gives
\begin{align}
f(x_{t+1})
&\le f(x_t)-\eta\langle\nabla f(x_t),\Sign(z_t)\rangle
      +\frac{L\eta^2}{2}\norm{\Sign(z_t)}^2\nonumber\\
&\le f(x_t)-\eta\normone{\nabla f(x_t)}
      +2\eta\sqrt d\norm{e_t}+\frac{L\eta^2d}{2}.
\end{align}
Taking expectations and summing from $t=1$ to $K$, we have
\[
\eta\sum_{t=1}^K\E\normone{\nabla f(x_t)}
\le\del+2\eta\sqrt d\sum_{t=1}^K\E\norm{e_t}
             +\frac{KL\eta^2d}{2}.
\]
For the error sum, Jensen's inequality followed by Cauchy--Schwarz gives
\[
\frac1K\sum_{t=1}^K\E\norm{e_t}
\le\frac1K\sum_{t=1}^K\sqrt{E_t}
\le\frac1K\sqrt{K\sum_{t=1}^K E_t}
=\sqrt{\overline E_K}.
\]
Dividing the previous descent inequality by $\eta K$ proves
\begin{equation}
\frac1K\sum_{t=1}^K\E\normone{\nabla f(x_t)}
\le\frac{\del}{\eta K}+\frac{L\eta d}{2}
+2\sqrt{d\,\overline E_K}.
\label{eq:descent-master}
\end{equation}
Because $\tau$ is uniform on $\{1,\ldots,K\}$ and independent of the run,
its expected gradient norm equals the average on the left. This proves
Theorem~\ref{thm:master}. For Corollary~\ref{cor:rate}, choose
\begin{equation}
u_1=\frac1{\sqrt K+c^{1/3}K^{2/3}},\qquad
\beta=u_1,\quad \eta=\frac{Hu_1}{L\sqrt d},\quad
B_0=\left\lceil\frac1{u_1^2K}\right\rceil.
\label{eq:rate-parameters}
\end{equation}
These are admissible because $0<u_1\le K^{-1/2}\le1$ and $B_0\ge1$.
Inserting the parameters into~\eqref{eq:average-error} controls each
contribution separately:
\[
\frac{cH^2}{B_0\beta K}\le cH^2u_1,\qquad
2cH^2\beta=2cH^2u_1,\qquad
\frac{2cL^2\eta^2d}{\beta}=2cH^2u_1.
\]
Thus $\overline E_K\le5cH^2u_1$, and~\eqref{eq:descent-master} becomes
\[
\E\normone{\nabla f(x_\tau)}
\le\sqrt d\left[
\frac{L\del}{Hu_1K}+\frac{Hu_1}{2}
+2\sqrt5H\sqrt{cu_1}\right].
\]
The definition of $u_1$ gives
\[
\frac1{u_1K}=K^{-1/2}+c^{1/3}K^{-1/3},\qquad
u_1\le K^{-1/2},\qquad
u_1\le c^{-1/3}K^{-2/3}.
\]
In particular, the third inequality implies
$\sqrt{cu_1}\le c^{1/3}K^{-1/3}$. Substitution proves the explicit bound
\begin{equation}
\E\normone{\nabla f(x_\tau)}
\le\sqrt d\left[
\left(\frac{L\del}{H}+\frac H2\right)K^{-1/2}
+\left(\frac{L\del}{H}+2\sqrt5H\right)
  \left(\frac cK\right)^{1/3}\right].
\label{eq:rate}
\end{equation}
Keeping $L,H,\del$ fixed and recalling $c=(1+\omega)/n$ gives
\eqref{eq:rate-l1}.

Finally, we specify initialization and a target accuracy.
Squaring the denominator of $u_1$ yields the exact batch size
\begin{equation}
B_0=\left\lceil(1+c^{1/3}K^{1/6})^2\right\rceil
=\Theta(1+c^{2/3}K^{1/3}).
\label{eq:batch}
\end{equation}
To verify the order including the ceiling, use
$1+b^2\le(1+b)^2\le2(1+b^2)$ for $b\ge0$, and
$y\le\lceil y\rceil\le y+1\le2y$ for $y\ge1$.
For any $\epsilon>0$, the following explicit integer horizon suffices:
\begin{equation}
K=\max\left\{
1,\
\left\lceil\frac{4d(L\del/H+H/2)^2}{\epsilon^2}\right\rceil,\
\left\lceil\frac{8cd^{3/2}(L\del/H+2\sqrt5H)^3}
                   {\epsilon^3}\right\rceil
\right\}.
\label{eq:stochastic-accuracy}
\end{equation}
The second entry makes the first term of~\eqref{eq:rate} at most
$\epsilon/2$: square that desired inequality and solve for $K$.
The third entry does the same for the second term by cubing it.
Their sum is therefore at most $\epsilon$. With fixed $L,H,\del$,
\begin{equation}
K=O\!\left(1+\frac d{\epsilon^2}
              +\frac{cd^{3/2}}{\epsilon^3}\right).
\label{eq:stochastic-horizon}
\end{equation}

\subsection{The \texorpdfstring{$\ell_2$}{l2} guarantee}
\label{app:l2}
Let $\mathcal G_t$ denote the history after the server
has formed $z_t$ and before drawing that downlink. Thus
\begin{equation}
\E[Q(z_t)\mid\mathcal G_t]=z_t,\qquad
\E[\norm{Q(z_t)}^2\mid\mathcal G_t]\le a\norm{z_t}^2.
\label{eq:q-downlink-moments}
\end{equation}
For clarity, write $Z_t=\E\norm{z_t}^2$ only within this proof. The
actual movement satisfies
\[
\E\norm{x_{t+1}-x_t}^2
=\eta^2\E\norm{Q(z_t)}^2\le a\eta^2Z_t.
\]
Inserting it into the general tracking bounds gives
\begin{align}
E_t&\le(1-\beta)E_{t-1}+2c\beta^2H^2
                         +2caL^2\eta^2Z_{t-1}\quad(t\ge2),
\label{eq:tracking-q}\\
\overline E_K&\le\frac{cH^2}{B_0\beta K}+2c\beta H^2
                  +\frac{2caL^2\eta^2}{\beta K}
                       \sum_{t=1}^{K-1}Z_t.
\label{eq:average-error-q}
\end{align}
Unlike the sign update, this movement becomes small when the tracker
becomes small. We must retain its dependence on $Z_t$ in the descent
calculation.

By $L$-smoothness and~\eqref{eq:q-downlink-moments},
\begin{align}
\E[f(x_{t+1})\mid\mathcal G_t]
&\le f(x_t)-\eta\langle\nabla f(x_t),z_t\rangle
                    +\frac{La\eta^2}{2}\norm{z_t}^2.
\end{align}
The exact inner-product identity
\[
2\langle\nabla f(x_t),z_t\rangle
=\norm{\nabla f(x_t)}^2+\norm{z_t}^2-\norm{e_t}^2
\]
then gives, after taking full expectations,
\begin{align}
\E f(x_{t+1})
&\le\E f(x_t)-\frac\eta2\E\norm{\nabla f(x_t)}^2
          -\frac\eta2(1-La\eta)Z_t+\frac\eta2E_t.
\end{align}
Summing from $t=1$ to $K$, using $\E f(x_{K+1})\ge f_*$, and dividing
by $\eta K/2$ yields
\begin{equation}
\frac1K\sum_{t=1}^K\E\norm{\nabla f(x_t)}^2
+\frac{1-La\eta}{K}\sum_{t=1}^K Z_t
\le\frac{2\del}{\eta K}+\overline E_K.
\label{eq:descent-master-l2}
\end{equation}
Substitute~\eqref{eq:average-error-q} and use
$\sum_{t=1}^{K-1}Z_t\le\sum_{t=1}^KZ_t$. Moving this last contribution
to the left gives
\begin{align}
&\frac1K\sum_{t=1}^K\E\norm{\nabla f(x_t)}^2
+\left(1-La\eta-\frac{2caL^2\eta^2}{\beta}\right)
  \frac1K\sum_{t=1}^KZ_t\nonumber\\
&\hspace{18mm}\le\frac{2\del}{\eta K}
                +\frac{cH^2}{B_0\beta K}+2c\beta H^2.
\end{align}
The independent uniform output $\tau$ turns the first average into
$\E\norm{\nabla f(x_\tau)}^2$. If $La\eta\le1/2$ and
$\beta\ge4caL^2\eta^2$, both subtracted terms in the tracker
coefficient are at most $1/2$, so the coefficient is nonnegative.
This proves Theorem~\ref{thm:master-l2}. The cancellation explains why
we did not replace $Z_t$ by a gradient-magnitude bound earlier.

We next prove Corollary~\ref{cor:rate-l2}. Recall $c=a/n$ and choose
\begin{equation}
r=(K/n^2)^{1/3},\qquad
\eta=\frac1{2La(1+r)},\quad
\beta=\frac1{n(1+r)^2},\quad B_0=\lceil1+r\rceil.
\label{eq:rate-parameters-l2}
\end{equation}
These choices satisfy $0<\beta\le1$ for every $n,K\ge1$. Direct
substitution gives
\[
La\eta=\frac1{2(1+r)},\qquad
4caL^2\eta^2=\frac1{n(1+r)^2}=\beta.
\]
The tracker coefficient in~\eqref{eq:master-l2} is consequently
\[
1-\frac1{2(1+r)}-\frac12=\frac r{2(1+r)}\ge0.
\]
We can drop that term and bound the three remaining terms separately:
\begin{align}
\frac{2\del}{\eta K}
&=\frac{4La\del(1+r)}K,\nonumber\\
\frac{cH^2}{B_0\beta K}
&=\frac{aH^2(1+r)^2}{B_0K}
 \le\frac{aH^2(1+r)}K,\nonumber\\
2c\beta H^2
&=\frac{2aH^2}{n^2(1+r)^2}
 =\frac{2aH^2r^3}{K(1+r)^2}
 \le\frac{2aH^2r}K.
\end{align}
The second bound uses $B_0\ge1+r$. The last uses
$r^3=K/n^2$ and $r^2\le(1+r)^2$. Adding the bounds and using
$r/K=(nK)^{-2/3}$ proves
\begin{equation}
\E\norm{\nabla f(x_\tau)}^2
\le a\left[\frac{4L\del+H^2}{K}
       +\frac{4L\del+3H^2}{(nK)^{2/3}}\right].
\label{eq:rate-constants-l2}
\end{equation}
Jensen's inequality and $\sqrt{u+v}\le\sqrt u+\sqrt v$ for $u,v\ge0$
give
\begin{align}
\E\norm{\nabla f(x_\tau)}
&\le\sqrt{\E\norm{\nabla f(x_\tau)}^2}\nonumber\\
&\le\sqrt{4L\del+H^2}\sqrt{\frac aK}
 +\sqrt{4L\del+3H^2}\frac{\sqrt a}{(nK)^{1/3}}.
\end{align}
This proves~\eqref{eq:rate-l2}. To obtain $\E\norm{\nabla f(x_\tau)}^2\le\epsilon^2$, and hence
$\E\norm{\nabla f(x_\tau)}\le\epsilon$, it suffices to choose
\begin{equation}
\begin{split}
K=\max\Biggl\{1,
\left\lceil\frac{2a(4L\del+H^2)}{\epsilon^2}\right\rceil,\left\lceil\frac{[2a(4L\del+3H^2)]^{3/2}}
                         {n\epsilon^3}\right\rceil\Biggr\}.
\end{split}
\label{eq:stochastic-accuracy-l2}
\end{equation}
The second entry makes the first term in~\eqref{eq:rate-constants-l2}
at most $\epsilon^2/2$. Raising the desired bound on its second term
to the power $3/2$ gives the third entry. Their sum is at most
$\epsilon^2$. For fixed $L,H,\del$, the resulting update and per-worker
sample complexities are
\begin{equation}
K=O\!\left(1+\frac a{\epsilon^2}
                +\frac{a^{3/2}}{n\epsilon^3}\right).
\label{eq:stochastic-horizon-l2}
\end{equation}
\section{Proofs for finite sums}
\label{app:finite-sum}

\subsection{Exact computation}
For $K$ updates and an integer
refresh period $q\ge1$, the times $1,1+q,1+2q,\ldots$ give
\[
r=1+\left\lfloor\frac{K-1}{q}\right\rfloor
 =\left\lceil\frac Kq\right\rceil.
\]
Each refresh evaluates every component once for $M=nm$ evaluations.
At each of the $K-r$ ordinary updates, every worker evaluates one
component at two points. Therefore
\begin{equation}
N_{\rm grad}=Mr+2n(K-r),\qquad
N_{{\rm grad},j}=mr+2(K-r).
\label{eq:fs-exact-gradient-counts}
\end{equation}
For $q=m$, the inequality $r\le1+K/m$ gives
\begin{equation}
N_{\rm grad}\le M\left(1+\frac Km\right)+2nK
=M+3nK.
\label{eq:fs-cost-bound}
\end{equation}
\subsection{Tracking error between exact refreshes}
\begin{lemma}
\label{lemma:fs-tracking}
For the estimator~\eqref{eq:fs-recursion}, the error is zero at each
refresh $t_0=1+kq$. For $t_0\le t<t_0+q$, we have
\begin{equation}
\E\norm{z_t-\nabla f(x_t)}^2
\le\frac{aL^2}{n}\sum_{s=t_0+1}^{t}
                             \E\norm{x_s-x_{s-1}}^2.
\label{eq:fs-local-error}
\end{equation}
Deterministic sign updates therefore give the bound
$aL^2\eta^2d(q-1)/n$. For updates $x_{t+1}=x_t-\eta Q(z_t)$,
with a fresh common downlink satisfying the same compressor assumption,
\begin{equation}
\frac1K\sum_{t=1}^K\E\norm{z_t-\nabla f(x_t)}^2
\le\frac{a^2L^2\eta^2(q-1)}{n}
                         \frac1K\sum_{t=1}^K\E\norm{z_t}^2.
\label{eq:fs-q-average-error}
\end{equation}
\end{lemma}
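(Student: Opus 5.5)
At each refresh $t_0=1+kq$ the server sets $z_{t_0}=\nabla f(x_{t_0})$ exactly, so the error there is zero. Between refreshes I would write an exact error recursion, mirroring the stochastic case. Let $e_t=z_t-\nabla f(x_t)$ and, for $t_0<t<t_0+q$, define the centered message errors
\[
\delta_t^j=Q(y_t^j)-[\nabla f_j(x_t)-\nabla f_j(x_{t-1})].
\]
Subtracting $\nabla f(x_t)$ from \eqref{eq:fs-recursion} then gives $e_t=e_{t-1}+\frac1n\sum_j\delta_t^j$. Condition on $\mathcal F_{t-1}$, the history up to and including $x_t$ but before the current component draws and compression calls. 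The uniform draw of $i_t^j$ gives $\E_t y_t^j=\nabla f_j(x_t)-\nabla f_j(x_{t-1})$, and unbiasedness of $Q$ then gives $\E_t\delta_t^j=0$. The workers' draws and compressions are independent, so the $\delta_t^j$ are conditionally independent. Expanding the square, the cross terms with $e_{t-1}$ vanish, as do the worker--worker cross terms, leaving
\[
\E_t\norm{e_t}^2=\norm{e_{t-1}}^2+\frac1{n^2}\sum_j\E_t\norm{\delta_t^j}^2 .
\]
This requires $x_t$ to be $\mathcal F_{t-1}$-measurable. For the $Q$-downlink variant this holds because the downlink randomness at step $t-1$ is drawn before time $t$.

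To bound each variance, I would center and then use \eqref{eq:compressor-second}:
\[
\E_t\norm{\delta_t^j}^2\le\E_t\norm{Q(y_t^j)}^2\le a\,\E_t\norm{y_t^j}^2\le aL^2\norm{x_t-x_{t-1}}^2,
\]
where the last step is the component smoothness \eqref{eq:fs-smooth}. This is the point where no gradient bound is needed, since the compressed quantity is a same-component difference. The $1/n^2$ factor summed over $n$ workers gives $aL^2/n$. Taking full expectations and telescoping from $t_0$, where $e_{t_0}=0$, proves \eqref{eq:fs-local-error}. For sign updates, $\norm{x_s-x_{s-1}}^2=\eta^2 d$ deterministically, and at most $q-1$ terms appear, which gives $aL^2\eta^2d(q-1)/n$.

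For the $Q$-downlink, $\E\norm{x_s-x_{s-1}}^2=\eta^2\E\norm{Q(z_{s-1})}^2\le a\eta^2\E\norm{z_{s-1}}^2$, by conditioning on the history containing $z_{s-1}$. Hence
\[
E_t\le\frac{a^2L^2\eta^2}{n}\sum_{s=t_0}^{t-1}Z_s ,
\]
where $Z_s=\E\norm{z_s}^2$. The remaining step, which I expect to be the only delicate one, is the averaging. Summing over $t$ in a block $[t_0,\min(t_0+q-1,K)]$, each $Z_s$ with $s$ in that block appears for the indices $t=s+1,\dots$ up to the block end. That is at most $q-1$ times, and the last index of the block is never charged. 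Blocks do not share terms, since each refresh resets the sum. Summing over blocks gives $\sum_{t=1}^K E_t\le\frac{a^2L^2\eta^2(q-1)}{n}\sum_{t=1}^K Z_t$, and dividing by $K$ yields \eqref{eq:fs-q-average-error}. I would also check that a truncated final block (when $K$ is not a multiple of $q$) only reduces the counts, so the bound still holds.
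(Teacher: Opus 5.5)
Your proposal is correct and follows essentially the same route as the paper. Both arguments use the centered one-step recursion $e_t=e_{t-1}+\frac1n\sum_j\delta_t^j$ with conditionally independent, centered worker errors, bounded via the compressor's second moment and component smoothness, then telescope from the refresh; for the $Q$-downlink, both charge each $\E\norm{z_s}^2$ at most $q-1$ times within disjoint blocks before averaging.
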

\begin{proof}\leavevmode
At an ordinary iteration, condition on the history through $x_t$ and
before the current component indices and compression calls. Denote this
conditional expectation by $\E_t$; the points $x_t,x_{t-1}$ and the
previous estimate $z_{t-1}$ are fixed under this conditioning.
Write
\begin{align*}
X_j&=Q\bigl(\nabla f_{j,i_t^j}(x_t)
                   -\nabla f_{j,i_t^j}(x_{t-1})\bigr),\\
\mu_j&=\E_tX_j=\nabla f_j(x_t)-\nabla f_j(x_{t-1}).
\end{align*}
Uniform component sampling and unbiased compression give the expression
for $\mu_j$. The same component is evaluated at both points. Thus
\begin{align}
\E_t\norm{X_j}^2
\le \frac a m\sum_{i=1}^m
       \norm{\nabla f_{j,i}(x_t)-\nabla f_{j,i}(x_{t-1})}^2
\nonumber\le aL^2\norm{x_t-x_{t-1}}^2.
\label{eq:fs-candidate-moment}
\end{align}
The first inequality uses~\eqref{eq:compressor-second}; the second is
component smoothness. Since $X_j-\mu_j$ is centered,
\[
\E_t\norm{X_j-\mu_j}^2
=\E_t\norm{X_j}^2-\norm{\mu_j}^2
\le aL^2\norm{x_t-x_{t-1}}^2.
\]
Note that the component indices and compressor calls are conditionally independent
across all $n$ workers. For $j\ne k$, this independence and centering imply
\[
\E_t\langle X_j-\mu_j,X_k-\mu_k\rangle
=\langle\E_t(X_j-\mu_j),\E_t(X_k-\mu_k)\rangle=0.
\]
Consequently,
\begin{align}
\E_t\left\|\frac1n\sum_{j=1}^n(X_j-\mu_j)\right\|^2
=\frac1{n^2}\sum_{j=1}^n\E_t\norm{X_j-\mu_j}^2
\nonumber \le\frac{aL^2}{n}\norm{x_t-x_{t-1}}^2.
\label{eq:fs-worker-variance}
\end{align}
The averaging is over all workers and
$n^{-1}\sum_j\mu_j=\nabla f(x_t)-\nabla f(x_{t-1})$ exactly.

Next, let $e_t=z_t-\nabla f(x_t)$. The estimator recursion gives
\[
e_t=e_{t-1}+\frac1n\sum_{j=1}^n(X_j-\mu_j).
\]
The new sum has zero conditional mean, and $e_{t-1}$ is fixed under
$\E_t$. Expanding the square yields
\begin{align*}
\E_t\norm{e_t}^2
&=\norm{e_{t-1}}^2
 +2\left\langle e_{t-1},\frac1n\sum_{j=1}^n
                                     \E_t(X_j-\mu_j)\right\rangle
 +\E_t\left\|\frac1n\sum_{j=1}^n(X_j-\mu_j)\right\|^2\\
&\le\norm{e_{t-1}}^2+\frac{aL^2}{n}\norm{x_t-x_{t-1}}^2.
\end{align*}
At a refresh, $z_{t_0}=\nabla f(x_{t_0})$, so $e_{t_0}=0$ on every
sample path. Taking total expectations and applying the recursion
successively at $t_0+1,\ldots,t$ gives
\[
\E\norm{e_t}^2
\le\E\norm{e_{t_0}}^2
       +\frac{aL^2}{n}\sum_{s=t_0+1}^t
                                     \E\norm{x_s-x_{s-1}}^2.
\]
This proves~\eqref{eq:fs-local-error}. For DVR-Sign-FS,
each summand is $\eta^2d$, and there are at most $q-1$ summands.

For DVR-Q-FS, condition on the history $\mathcal G_s$ after all uplinks
in iteration $s$ and before the server's new compression call. Then
\[
\E[\norm{x_{s+1}-x_s}^2\mid\mathcal G_s]
=\eta^2\E[\norm{Q(z_s)}^2\mid\mathcal G_s]
\le a\eta^2\norm{z_s}^2.
\]
Substituting its total expectation in~\eqref{eq:fs-local-error} yields
\[
\E\norm{e_t}^2
\le\frac{a^2L^2\eta^2}{n}
                   \sum_{s=t_0}^{t-1}\E\norm{z_s}^2.
\]
Let $t_1=\min\{t_0+q-1,K\}$ be the last iteration in this interval
between refreshes. Reversing the order of the finite sums gives
\begin{align*}
\sum_{t=t_0}^{t_1}\E\norm{e_t}^2
&\le\frac{a^2L^2\eta^2}{n}
           \sum_{s=t_0}^{t_1-1}(t_1-s)\E\norm{z_s}^2\\
&\le\frac{a^2L^2\eta^2(q-1)}{n}
           \sum_{s=t_0}^{t_1}\E\norm{z_s}^2.
\end{align*}
These intervals are disjoint. Summing over them and dividing by $K$
proves~\eqref{eq:fs-q-average-error}.
\end{proof}

\subsection{Descent for the \texorpdfstring{$\ell_1$}{l1} criterion}
\begin{proof}[Proof of Theorem~\ref{thm:fs-master}]
The deterministic sign $s_t=\Sign(z_t)$ has squared norm $d$.
Thus the deterministic sign case of Lemma~\ref{lemma:fs-tracking}
proves~\eqref{eq:fs-error}. To translate this into stationarity,
smoothness gives
\[
f(x_{t+1})\le f(x_t)-\eta\langle\nabla f(x_t),\Sign(z_t)\rangle
                                  +\frac{L\eta^2d}{2}.
\]
By~\eqref{eq:alignment},
\[
\langle\nabla f(x_t),\Sign(z_t)\rangle
\ge\normone{\nabla f(x_t)}
                    -2\sqrt d\norm{z_t-\nabla f(x_t)}.
\]
Substituting and rearranging before taking expectations gives
\[
\eta\E\normone{\nabla f(x_t)}
\le\E f(x_t)-\E f(x_{t+1})
 +2\eta\sqrt d\E\norm{z_t-\nabla f(x_t)}+\frac{L\eta^2d}{2}.
\]
Sum this inequality over $t=1,\ldots,K$. The objective terms telescope
to $f(x_1)-\E f(x_{K+1})\le f(x_1)-f_*=\del\le\Delta_0$.
After division by $\eta K$, we obtain
\begin{align*}
\frac1K\sum_{t=1}^K\E\normone{\nabla f(x_t)}
&\le\frac{\Delta_0}{\eta K}+\frac{L\eta d}{2}
       +\frac{2\sqrt d}{K}\sum_{t=1}^K
                              \E\norm{z_t-\nabla f(x_t)}\\
&\le\frac{\Delta_0}{\eta K}+\frac{L\eta d}{2}
       +2\sqrt d\sqrt{\frac{aL^2\eta^2d(q-1)}n}\\
&=\frac{\Delta_0}{\eta K}
       +L\eta\left[\frac d2+2d\sqrt{\frac{a(q-1)}n}\right].
\end{align*}
In the second line, $\E\norm{z_t-\nabla f(x_t)}
\le\sqrt{\E\norm{z_t-\nabla f(x_t)}^2}$ and
Lemma~\ref{lemma:fs-tracking} bound each summand. Uniformity of $\tau$ implies that the average on the left equals
$\E\normone{\nabla f(x_\tau)}$. This proves~\eqref{eq:fs-master}.

Finally, $J_1\ge d/2>0$, so the positive stepsize is well defined and
$L\eta J_1=\epsilon/2$. If $\Delta_0>0$, the choice of $K$ gives
\[
\frac{\Delta_0}{\eta K}
=\frac{2L\Delta_0J_1}{\epsilon K}\le\frac\epsilon2.
\]
\end{proof}
\paragraph{Complexity}
Set $q=m$.
The target-accuracy choice gives, for every $\epsilon>0$,
\[
K \le 1+\frac{4L\Delta_0J_1}{\epsilon^2}
=1+\frac{4L\Delta_0d}{\epsilon^2}
                   \left[\frac12+2\sqrt{\frac{a(m-1)}n}\right].
\]
For the total oracle count, multiply the coefficient by $n$:
\begin{equation}
nJ_1=\frac{dn}{2}+2d\sqrt{an(m-1)}
\le\frac{dn}{2}+2d\sqrt{aM}.
\label{eq:fs-tuning-bounds}
\end{equation}
The inequality uses $n(m-1)\le nm=M$.
Substituting the iteration bound into~\eqref{eq:fs-cost-bound} now gives
\begin{align*}
N_{\rm grad}
&\le M+3n+\frac{12L\Delta_0nJ_1}{\epsilon^2}\\
&\le4M+\frac{12L\Delta_0d}{\epsilon^2}
                     \left[\frac n2+2\sqrt{aM}\right].
\end{align*}
Here $n\le M$ absorbs the extra update arising from the ceiling.

\subsection{Descent for the \texorpdfstring{$\ell_2$}{l2} criterion}
\begin{proof}[Proof of Theorem~\ref{thm:fs-master-l2}]\leavevmode
Condition on the post-uplink history $\mathcal G_t$, which includes
$x_t$ and $z_t$ and precedes the server's new call to $Q$.
Unbiasedness and the second-moment bound give
\[
\E[Q(z_t)\mid\mathcal G_t]=z_t,\qquad
\E[\norm{Q(z_t)}^2\mid\mathcal G_t]\le a\norm{z_t}^2.
\]
Since $x_{t+1}=x_t-\eta Q(z_t)$, smoothness implies
\[
\E[f(x_{t+1})\mid\mathcal G_t]
\le f(x_t)-\eta\langle\nabla f(x_t),z_t\rangle
                                      +\frac{La\eta^2}{2}\norm{z_t}^2.
\]
The inner product has the exact decomposition
\[
2\langle\nabla f(x_t),z_t\rangle
=\norm{\nabla f(x_t)}^2+\norm{z_t}^2
                            -\norm{z_t-\nabla f(x_t)}^2.
\]
Substitute this identity, take total expectations, and rearrange:
\begin{align*}
\frac\eta2\E\norm{\nabla f(x_t)}^2
+\frac\eta2(1-La\eta)\E\norm{z_t}^2
\le \E f(x_t)-\E f(x_{t+1})
                +\frac\eta2\E\norm{z_t-\nabla f(x_t)}^2.
\end{align*}
Sum the last inequality over $t=1,\ldots,K$ and divide by $\eta K/2$.
The objective values telescope, and $f(x_1)-\E f(x_{K+1})\le\Delta_0$.
Applying~\eqref{eq:fs-q-average-error} to the right side gives
\begin{align}
\frac1K\sum_{t=1}^K\E\norm{\nabla f(x_t)}^2
&+\left[1-La\eta-\frac{a^2L^2\eta^2(q-1)}n\right]
                \frac1K\sum_{t=1}^K\E\norm{z_t}^2
\nonumber\\
&\le\frac{2\Delta_0}{\eta K}.
\label{eq:fs-q-descent}
\end{align}
Write $s=\sqrt{(q-1)/n}\ge0$.
The choice in~\eqref{eq:fs-geometry-l2} gives
\[
La\eta+\frac{a^2L^2\eta^2(q-1)}n
=\frac1{2(1+s)}+\frac{s^2}{4(1+s)^2}
=\frac14+\frac1{4(1+s)^2}\le\frac12.
\]
Thus the bracket in~\eqref{eq:fs-q-descent} is at least $1/2$,
and we may discard its nonnegative contribution. Since
$2\Delta_0/(\eta K)=4L\Delta_0J_Q/K$, we obtain the squared-norm
bound in~\eqref{eq:fs-master-l2}. The independent uniform choice of
$\tau$ identifies its expectation with the averaged left side.
Cauchy--Schwarz then gives
\[
\E\norm{\nabla f(x_\tau)}
\le\sqrt{\E\norm{\nabla f(x_\tau)}^2}
\le2\sqrt{\frac{L\Delta_0J_Q}{K}}.
\]
\end{proof}
\paragraph{Complexity}
For $q=m$, we have
$J_Q=a(1+\sqrt{(m-1)/n})$. Hence
\[
K\le1+\frac{4L\Delta_0J_Q}{\epsilon^2}
=1+\frac{4aL\Delta_0}{\epsilon^2}
                    \left[1+\sqrt{\frac{m-1}{n}}\right],
\]
which proves~\eqref{eq:fs-update-complexity-l2}.
To count gradients, use
\[
nJ_Q=a\left[n+\sqrt{n(m-1)}\right]\le a(n+\sqrt M).
\]
Then~\eqref{eq:fs-cost-bound} yields
\begin{align*}
N_{\rm grad}
&\le M+3n+\frac{12L\Delta_0nJ_Q}{\epsilon^2}\\
&\le4M+\frac{12aL\Delta_0}{\epsilon^2}(n+\sqrt M).
\end{align*}
\end{document}